\documentclass[10pt]{article} 
\usepackage[preprint]{tmlr}


\usepackage{amsmath,amsfonts,bm}









\def\eqref#1{equation~\ref{#1}}









\def\1{\bm{1}}










\DeclareMathAlphabet{\mathsfit}{\encodingdefault}{\sfdefault}{m}{sl}
\SetMathAlphabet{\mathsfit}{bold}{\encodingdefault}{\sfdefault}{bx}{n}













\DeclareMathOperator*{\argmax}{arg\,max}

\usepackage{hyperref}
\usepackage{url}
\usepackage{booktabs}
\usepackage{mathtools}
\usepackage{amsmath}
\usepackage{amssymb}
\usepackage{amsthm}
\usepackage{algorithm}
\usepackage{algpseudocode}
\usepackage{setspace}
\usepackage{multirow}
\usepackage{multicol}
\usepackage{comment}

\allowdisplaybreaks

\ifodd 0
\newcommand{\rev}[1]{{\color{blue}#1}}
\else
\newcommand{\rev}[1]{{#1}}
\fi

\newcommand\numberthis{\addtocounter{equation}{1}\tag{\theequation}}

\newcommand\preceqdot{\mathrel{\ooalign{$\preceq$\cr
  \hidewidth\raise0.225ex\hbox{$\cdot\mkern0.5mu$}\cr}}}

\newtheorem{thm}{Theorem}
\newtheorem{defn}{Definition}
\newtheorem{examp}{Example}
\newtheorem{lem}{Lemma}
\newtheorem{rem}{Remark}
\newtheorem{cor}{Corollary}

\title{Robust Pareto Set Identification With Contaminated Bandit Feedback}



\author{\name İlter Onat Korkmaz\thanks{Equal contribution.}  \email onat.korkmaz@bilkent.edu.tr \\
      \addr Department of Electrical and Electronics Engineering\\
      Bilkent University
      \AND
      \name Efe Eren Ceyani\footnotemark[1] \email eren.ceyani@bilkent.edu.tr \\
      \addr Department of Electrical and Electronics Engineering\\
      Bilkent University
      \AND
      \name Kerem Bozgan\footnotemark[1] \email kerembozgan@vt.edu \\
      \addr Department of Computer Science\\
      Virginia Polytechnic Institute and State University
      \AND
      \name Cem Tekin \email cemtekin@ee.bilkent.edu.tr \\
      \addr Department of Electrical and Electronics Engineering\\
      Bilkent University}



\begin{document}

\maketitle

\begin{abstract}
We consider the Pareto set identification (PSI) problem in multi-objective multi-armed bandits (MO-MAB) with contaminated reward observations. At each arm pull, with some fixed probability, the true reward samples are replaced with the samples from an arbitrary contamination distribution chosen by an adversary. We consider $(\alpha,\delta)$-PAC PSI and propose a sample median-based multi-objective adaptive elimination algorithm that returns an $(\alpha,\delta)$-PAC Pareto set upon termination with a sample complexity bound that depends on the contamination probability. As the contamination probability decreases, we recover the well-known sample complexity results in MO-MAB. We compare the proposed algorithm with a mean-based method from MO-MAB literature, \rev{as well as an extended version that uses median estimators,} on several PSI problems under adversarial corruptions, including review bombing and diabetes management. Our numerical results support our theoretical findings and demonstrate that robust algorithm design is crucial for accurate PSI under contaminated reward observations.
\end{abstract}

\section{Introduction}

Multi-armed bandit (MAB) problem involves decision making under uncertainty in which a finite amount of resources are allocated between a limited number of options (arms) in order to optimize gain over time (or equally minimize regret). In the classical setting, each arm is associated with a reward distribution that is unknown or only partially known at the time of allocation and the information on distributions increase as more observations are made over time \citep{thompson1933likelihood, robbins1952some, lai1985asymptotically}.

Over the last decades, MAB algorithms have been used in a broad range of applications such as medical treatment allocation \citep{villar2015multi}, financial portfolio design \citep{Shen2015PortfolioCW}, adaptive routing \citep{routing}, cellular coverage optimization \citep{shen2018generalized}, news article recommendation \citep{news_article}, and online advertising \citep{advertising}. Due to the security concerns in these applications, adversarial MABs have attracted considerable attention. A variety of adversary models are considered that come with different restrictions on the adversary. One of the widely studied attack model is the attack that has a bounded attack value. \citet{auer_2002} consider adversarial attacks bounded in value and propose the famous Exp3 algorithm for robust learning, and derive both upper and lower bounds on the regret. However, the asymptotic behaviour of the regret bound approaches to that of a linear bound and hence becomes trivial, as the hardness of the competitor sequence increases. This type of adversarial attack is further studied in \citet{stoltz, audibert, bubeck2012regret}. Another popular attack model considered in the literature is the attack model that has a limited budget of attack value. In this model, the total amount of corruption injected in reward samples over all rounds is limited by a certain amount. Two notable works that study this kind of attack are \citet{lykouris2018stochastic} and \citet{gupta2019better}. Another adversarial attack model is the bounded probability attack model. In this model, an attack can occur at every round with fixed probability. Unlike the attack models mentioned before, this model does not put any restrictions on the attack value. This attack model is considered in \citet{Kapoor2018CorruptiontolerantBL, altschuler, guan, mathieu2024bandits, meanbased, privrob}, and also in this study. Adversarial attacks are also studied within the context of stochastic linear MAB \citep{bogunovic2021stochastic} and Gaussian process MAB \citep{han2022adversarial}. 

Multi-objective MABs (MO-MABs) are another significant extension of the MAB setting where multiple, possibly conflicting objectives are optimized simultaneously. Unlike the single objective optimization, in multi-objective optimization (MOO) problems, it is not possible to identify a single optimal arm in most of the cases. Therefore, in MOO, the aim is to identify the set of Pareto optimal arms which are not dominated by any other arm (see Section \ref{sec:probform} for definition of Pareto optimality).

MO-MABs are extensively studied in the non-adversarial, stochastic settings for regret minimization and Pareto set identification (PSI) problems. \citet{auer} propose an elimination-based adaptive arm sampling algorithm, and derive upper and lower bounds on the sample complexity of successful PSI \rev{(Theorems 4 and 17 therein). These bounds are $\sum_i \frac{4320}{\left(\Delta_i^{\epsilon_0}\right)^2} \log \left(\frac{12 K D}{\delta \Delta_i^{\epsilon_0}}\right)$ and $\Omega\left(\sum_{i=1}^K \frac{1}{\left(\tilde{\Delta}_i^{\epsilon_0}\right)^2} \log \left(\frac{1}{\delta}\right)\right)$ respectively, where $\tilde{\Delta}_i^{\epsilon_0}$ and $\Delta_i^{\epsilon_0}$ are their gaps defined with mean differences of arms as well as their accuracy parameter $\epsilon_0$. In their bounds, $K$ is the number of arms, $D$ is the number of objectives and $\delta$ is the confidence parameter.} \citet{drugan} investigate MO-MAB from the regret minimization perspective using scalarization based methods. These methods turn the multi-objective problem into a single-objective problem, which can be solved efficiently via well-known single-objective bandit algorithms such as UCB \citep{auer_ucb}. Multi-objective variants of Thompson Sampling and Knowledge Gradient algorithms are investigated in \citet{yahyaa} and \citet{yahyaa_kg}. Regret minimization in multi-objective contextual bandit problems is studied in \citet{tekin2018multi} and \citet{turgay2018multi}. 

Another line of work \citep{epal,hernandez2014predictive,shah2016pareto,nika2020pareto, active_learning} focuses on PSI with Gaussian process priors and propose acquisition strategies to utilize prior induced dependencies between mean arm rewards.

\begin{table}[t]	
	\setlength{\tabcolsep}{5pt}
	\renewcommand{\arraystretch}{1} 
	\caption{Comparison with the related work.}
	\label{tab:comparison}
	\centering
	\small
	\begin{tabular}{c cccc}
		\toprule
		\textbf{Work} & \textbf{Setting} & \textbf{Goal} & \textbf{Adversary} & \textbf{Bound} \\
	    \midrule
		\citet{auer} &MO-MAB & PSI & Adv. free & Samp. com.\\
		\citet{altschuler}&MAB& Best arm id. & Obl.,Presc.,Mal. & Samp. com.\\
		\citet{auer_2002} &MAB& Regret min. & Bounded attack & Cum. reg.\\
		\citet{drugan} & MO-MAB& Regret min. & Adv. free & Cum. reg.\\
		\citet{nika2020pareto} & MO GP& PSI & Adv. free & Samp. com.\\
		\cite{epal} & MO GP& PSI& Adv. free & Samp. com.\\
		\citet{Kapoor2018CorruptiontolerantBL} & MAB & Regret min. & Oblivious & Cum. reg. \\
		\citet{guan} &MAB & Regret min. & Prescient like & Cum. reg. \\
		\citet{lykouris2018stochastic} &MAB & Regret min. & Prescient like & Cum. reg. \\
		\midrule
		\textbf{Ours} & MO-MAB& PSI & Obl.,Presc.,Mal.& Samp. com.\\
		\bottomrule
	\end{tabular} 
\vspace{-3mm} 
\end{table} 
\raggedbottom

\subsection{Contribution and Comparison with Related Works}

In the literature, single objective MAB problem is studied under various attack models. In most of the MAB literature, the goal is to identify the arm that corresponds to the reward distribution with the highest first order statistic (mean) \citep{evendar}. This is only justified when the attack model is assumed to be bounded in value since mean cannot be estimated from samples contaminated with an attack that has unbounded value. However in many applications, it is more plausible to restrict the probability of occurrence of an attack instead of the attack value. For instance, consider review bombing, an internet phenomenon where some accounts post negative reviews or ratings for a product, service, or content as a form of protest or manipulation. In this case, the bounded attack probability represents the fraction of adversarial users in the system. As shown in the single objective adversarial MAB studies that consider bounded probability attack model, the median is a robust measure against the unbounded attacks \citep{altschuler}. \rev{\citet{altschuler} work on this setting and provide upper and lower bounds as $\mathcal{O} \left(\sum_{i \neq i^*} \frac{1}{\tilde{\Delta}_i^2} \log \left(\frac{k}{\delta \tilde{\Delta}_i}\right)\right)$ and $\Omega\left(\sum_{i \in[k] \setminus\left\{i^*\right\}} \frac{1}{\max \left(\tilde{\Delta}_i, \alpha\right)^2} \log \frac{1}{\delta}\right)$ respectively (Theorems 3 and 18 therein), where $\tilde{\Delta}_i$ is the gap of arm $i$ in the contaminated case, $[k]$ is the set of of arms, $\delta$ and $\alpha$ are the confidence and accuracy parameters, and $i^*$ is the optimal arm. The methods introduced for the contaminated best arm identification problem in \citet{altschuler} operate on the principle of elimination, where designs that are not the best arm are progressively removed until one arm remains. However, this termination criterion, and hence the methods of \citet{altschuler}, cannot be trivially extended to the multi-objective setting, as the number of Pareto optimal arms is unknown. To address these challenges, we establish the median statistic as a robust measure in the multi-objective case and propose a method to solve PSI in the adversarial setting.} The detailed comparison of our work with the prior work from the literature is provided in Table \ref{tab:comparison}. 
Our contributions are summarized as follows: 
\begin{itemize}
    \item We propose a robust algorithm that returns an $(\alpha,\delta)$-PAC Pareto set of arms under adversarial contamination, including oblivious, prescient and malicious adversarial attacks (see Section \ref{sec:attackmodels} for precise definitions).
    \item We provide a tight sample complexity bound for our algorithm depending on the contamination probability. In particular, when the reward distributions are subgaussian, our sample complexity bound has the same dependence on $\alpha$ as  Algorithm 1 of \citet{auer} that works in the contamination-free setting, scaling as $\mathcal{O}\left( \frac{K}{\alpha^2} \log \left( \frac{MK}{\delta \alpha} \right)\right)$.
    \item We conduct extensive experiments on real world data that verify the robustness of our algorithm to adversarial corruptions. 
\end{itemize}
\subsection{Organization}

In Section \ref{sec:notation}, we introduce the necessary notation. In Section \ref{sec:probform}, we formulate the adversarial MO-MAB problem. In Section \ref{sec:algorithm}, we describe our median-based Pareto elimination algorithm. In Section \ref{sec:bounds}, we prove that the proposed algorithm satisfies the accuracy and coverage requirements defined in Section \ref{subsec:pareto accuracy} and prove a sample complexity bound. In Section \ref{sec:experiments}, we give the experimental results. Conclusions of the research and future directions are highlighted in Section \ref{sec:conclusion}.

\section{Notation}
\label{sec:notation}

We denote the Bernoulli distribution with parameter $\rho \in [0,1]$ by Ber$(\rho)$. We denote the set of positive integers by $\mathbb{N}_{+}$ and the set $\{1,..., n\}$ by $[n]$ for $n \in \mathbb{N}_{+}$. We use the short hand notation $[a \pm b]$ to denote the interval $[a - b, a + b]$. We use the abbreviation {\em w.h.p.} to denote {\em with high probability} and {\em cdf} to denote {\em cumulative distribution function}.

Let $F$ represent a cdf and $X$ be a random variable such that $X \sim F$. We denote the right and left quantile functions of $X$ by $Q_{R, F}(p) \coloneqq \inf\{x \in \mathbb{R} : F(x) > p\}$ and $Q_{L, F}(p) \coloneqq \inf\{x \in \mathbb{R} : F(x) \geq p\}$ for $p \in [0,1]$ respectively. The following notations are borrowed from \cite{altschuler}. The set of medians is denoted by $m_1(F) \coloneqq [Q_{L,F}(\frac{1}{2}), Q_{R,F}(\frac{1}{2})]$. We also use the shorthand $m_1(X)$ to denote $m_1(F)$. In the case where median is unique, we use $m_1(F)$ to denote the median instead of the singleton set containing this value. Note that $m_1(F)$ can be considered to be the robust analogue to mean. We denote the empirical median of a sequence of samples $x_1, \ldots, x_n \in \mathbb{R}$ as $\hat{m}_1(x_1,\ldots,x_n)$. If $n$ is odd, this corresponds to the middle value in the sequence. If $n$ is even, it corresponds to the average of two middle values. Given that $F$ has a unique median, we define the median absolute deviation of $F$ as $m_2(F):=$ $m_1\left(\left|X-m_1(F)\right|\right)$. 

Suppose $x$ is an $M$-dimensional vector. We denote the $i$th element of $x$ by $x^i$. Consider another $M$-dimensional vector $y$. We use the notation $x \preceq y$ to denote that vector $x$ is weakly dominated by vector $y$, or equivalently, for all $i \in [M]: x^i \leq y^i$. Also we use the notation $x \npreceq y$ to denote that $x$ is not weakly dominated by $y$, or equivalently, there exists $i \in [M]: x^i > y^i$. We say that $x$ is dominated by $y$, denoted by $x \prec y$, if $x \preceq y$ and there exists $i \in[M] : x^i<y^i$.

Suppose $a$ is a scalar and $x$ is a vector. We use the notation $x+a$ and $x-a$ to denote the summation of each element of $x$ with $a$ and the subtraction of each element of $x$ by $a$ respectively. We also define the ordering relations between scalars and vectors similar to the ones defined between the vectors above: $x \preceq a$ denotes that for all $i$, $x^i \leq a$; $a \preceq x$ denotes that for all $i$, $a \leq x^i$; $a \npreceq x$ denotes that there exists $i$ such that $a > x^i$ and $x \npreceq a$ denotes that there exists $i$ such that $x^i > a$.

\section{Problem Formulation}
\label{sec:probform}

We consider a multi-objective pure exploration problem with $M$ objectives indexed by $d \in [M]$ and $K$ arms indexed by $i \in [K]$. The learner sequentially samples arms over rounds $n \in \mathbb{N}_{+}$. When selected in round $n$, arm $i$ generates a random reward (outcome) vector $Y_{i,n} := (Y_{i,n}^d)_{d \in [M]}$, where $Y_{i,n}^d$ represents the reward in objective $d$. Arm reward distributions do not depend on $n$. The cdf of $Y_{i,n}^d$ is denoted by $F^d_i$. The random variables $Y_{i, n}^a$ and $Y_{j, n}^b$ can be correlated for all $i, j \in [K]$ and $a, b \in [M]$. When an arm is selected, its random reward vector is not directly observed by the learner. Instead, the learner observes the contaminated random reward vector denoted by $\tilde{Y}_{i,n} := (\tilde{Y}_{i,n}^d)_{d \in [M]}$, where $\tilde{Y}_{i,n}^d$ represents the contaminated reward in objective $d$.

Next, we describe the contamination model. The contamination probability is fixed across all arms and objectives and is denoted by $\epsilon \in (0, \frac{1}{2})$. Bernoulli random variable corresponding to arm $i$ and objective $d$ that determines whether a contamination occurs at round $n$ is denoted by $B_{i, n}^d \sim \text{Ber}(\epsilon)$. If a contamination occurs at arm $i$ and objective $d$ in round $n$, then the observed reward is sampled from the contamination distribution instead of the true reward distribution. Formally,
\begin{equation*}
    \tilde{Y}_{i, n}^d = 
    \begin{cases}
    Y_{i,n}^d & \text{if}\ B_{i,n}^d = 0 \\
    Z_{i,n}^d & \text{if}\ B_{i,n}^d = 1
    \end{cases}
\end{equation*}
where $Z_{i,n}^d$ represents the contaminated reward. Equivalently, \(\tilde{Y}_{i,n}^d = (1-B_{i,n}^d)Y_{i,n}^d +  B_{i,n}^dZ_{i,n}^d\). The cdf of $Z_{i,n}^d$ is denoted by $G^d_{i,n}$. We allow the contamination distributions depend on round $n$. The cdf of $\tilde{Y}_{i, n}^d$ is denoted by $\tilde{F}^d_{i,n}$.

We define \textit{median of interest} $m_i^d$, corresponding to cdf $F_i^d$, as the mean of right and left $(1/2)$-quantiles of $F_i^d$, i.e., 
\begin{equation*}
m_i^d := \frac{Q_{R, F_i^d}(\frac{1}{2})+ Q_{L, F_i^d}(\frac{1}{2})}{2} ~.
\end{equation*}
Note that if $F_i^d$ has a unique median, $m_i^d$ is equivalent to this median. We also define \textit{median of interest vector} of arm $i$, which we denote by $m_i$, as the $M$-dimensional vector whose elements are the medians of interests that are associated with arm $i$. Next, we define the Pareto optimal set of arms according to median of interest. 
\begin{defn} \label{defn:Paretoset}
$P^{*} := \{i \in [K]| \; \nexists j \in [K] : m_i \prec m_j \}$. 
\end{defn}
The learner's goal is to (approximately) identify $P^{*}$ by sampling as few arms as possible. A PSI algorithm stops after conducting a series of sequential evaluations of arms in $[K]$ with the aim of returning a predicted Pareto set $P$ that approximates $P^*$ up to a given level of accuracy (formally defined in Section \ref{sec:suboptimality definition}). 

\subsection{Adversarial Contamination Models} \label{sec:attackmodels}

We consider three contamination models, which we give in the order from the weakest to the strongest below in terms of the adversarial power. Our contamination models are extensions of the contamination models in \citet{altschuler} to MO-MAB. \rev{Specifically, we extend their contamination models to hold for any dimension $d\in [M]$.}

{\em Oblivious adversary:} Chooses all the contamination distributions a priori without the knowledge of the arm \rev{rewards} or the rounds in which the samples are corrupted. Formally, for any given $i \in [K]$ and all $d \in [M]$, $\{(Y_{i,n}^{d}, Z_{i,n}^{d}, B_{i,n}^{d})\}_{n \geq 1}$ triples are independent. Furthermore, for any given $n$, $Y_{i,n}^d$ and $B_{i,n}^d$ are independent for all $i$ and $d$. Therefore, $\tilde{F}_{i, n}^d$ is equivalent to $(1- \epsilon)F_i^d + \epsilon G_{i,n}^d$ in this model. A motivating example for oblivious adversary is sensing errors in sensor networks. An arm corresponds to a sensor, and once activated the sensor collects $M$ measurements. Oblivious adversary models randomly occurring measurement errors due to the environmental effects or sensor defects.

{\em Prescient adversary:} Can choose contamination distributions based on all the past and future true arm \rev{rewards} and the outcome of Bernoulli random variable that determines if a contamination occurs. Formally, for any given $i \in [K]$ and $d \in [M]$, the pairs $\{(Y_{i, n}^d, B_{i, n}^d )\}_{n \geq 1}$ are independent. Furthermore, for any given $n$, $Y_{i,n}^d$ and $B_{i,n}^d$ are independent for all $i$ and $d$ and $Z_{i,n}^d$ may depend on all the realizations $\{Y_{j, s}^d, B_{j,s}^d, Z_{j, s}^d\}_{j \in [K], d \in [M], s \geq 1}$. This can model randomly occurring sensing errors where the observed (corrupted) value depends on the true rewards. The prescient adversary model can also be a good fit for review bombing. For instance, a protesting user can give lowest scores to Pareto optimal arms.

{\em Malicious adversary:} \rev{In the malicious adversary, not only can the adversary choose the contamination distributions based on both past and future true arm rewards (as in the prescient model), but they also have the additional ability to manipulate the occurrence of contaminations based on the true rewards.

Formally, in the case of prescient adversary, for any arm \(i \in [K]\) and objective \(d \in [M]\), the pairs \(\{(Y_{i,n}^d, B_{i,n}^d)\}_{n \geq 1}\) remain independent over time. However, unlike the prescient adversary, the malicious adversary can couple the contamination indicator \(B_{i,n}^d\) (which determines whether a reward is contaminated) with the true reward \(Y_{i,n}^d\). This means that the adversary can condition both the choice of whether to contaminate and the contaminated value \(Z_{i,n}^d\) on all previous and future observed outcomes. 

For example, in a cybersecurity setting, an attacker can perform a man-in-the-middle attack, where they intercept and modify communications between two parties. By selectively corrupting rewards from certain arms, the attacker can manipulate the algorithm's choices.}

\subsection{Unavoidable Bias and Median Concentration}\label{sec:bias}

Because our adversarial contamination models allow for an arbitrary contamination distribution, mean statistics cannot be predicted from the contaminated samples. Furthermore, median statistic is subject to an unavoidable bias which makes the median identifiable only up to a certain interval. Below we will review results from \cite{altschuler}, which quantifies the amount of unavoidable bias and the concentration of sample median. The results presented in this subsection are related to the concentration of the median in a single objective. Later on, we will utilize them for our multi-objective PSI identification and sample complexity analysis.

\begin{defn}\label{def:class_def}\citep[Definition~5]{altschuler}. For any $\bar{t} \in (0, \frac{1}{2})$ and positive, non-decreasing function $R$ defined on domain $[0, \bar{t}]$, define $C_{R, \bar{t}}$ to be the family of all distributions $F$ that satisfy the following:
\begin{align*} 
    R(t) \geq \max \left\{ Q_{R,F} \left( \frac{1}{2} + t \right) - m , m - Q_{L, F} \left( \frac{1}{2} -t \right) \right\} ~\text{for all $t \in [0, \bar{t}]$ and $m \in m_1(F)$.}
\end{align*}
\end{defn} 
$R$ bounds the maximum quantile deviation that can occur from the median. It will play a key role in our sample complexity analysis. We will choose a common $R$ for all $\{F_i^d\}_{i, d}$ in order to facilitate our analysis. 

Below, we state results on concentration of the empirical median. These will be used in our sample complexity analysis.

\begin{lem} \label{lem:emp_bound1} (Upper bound on empirical median deviation for prescient and oblivious adversaries) \citep[Lemma~7]{altschuler}.
Let $\bar{t} \in (0, \frac{1}{2})$, $\epsilon \in (0, \frac{2\bar{t}}{1 + 2\bar{t}})$, $\delta \in (0,1)$ and $F \in \rev{C_{R,\bar{t}}}$, \rev{where $R$ is a non-decreasing function defined on domain $[0, \bar{t}]$}. Let $Y_i \sim F$ and $B_i \sim Ber(\epsilon)$ all be independently drawn for $i \in [n]$. Let $\{Z_i\}_{i \in [n]}$ be arbitrary random variables possibly depending on $\{Y_i, B_i\}_{i \in [n]}$, and $\tilde{Y}_i = (1- B_i)Y_i + B_iZ_i$. Then for $n \geq 2(\bar{t} - \frac{\epsilon}{2(1- \epsilon)})^{-2} \log(\frac{2}{\delta})$: 
\begin{align*}
\mathbb{P} \Big(\sup_{m \in m_1(F)} |\hat{m} (\tilde{Y}_1, ..., \tilde{Y}_n) - m| \leq R \Big(\frac{\epsilon}{2(1- \epsilon)}+ \sqrt{\frac{2 \log(2/\delta)}{n}} \Big) \Big) \geq 1- \delta ~.
\end{align*}
\end{lem}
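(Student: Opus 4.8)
The plan is to prove a one-sided bound on the upper deviation of the empirical median, invoke symmetry for the lower deviation, and close with a union bound together with the class condition of Definition~\ref{def:class_def}. Write $t^{*} := \frac{\epsilon}{2(1-\epsilon)} + \sqrt{\frac{2\log(2/\delta)}{n}}$. The two hypotheses are exactly what make $t^{*}$ admissible: $\epsilon < \frac{2\bar{t}}{1+2\bar{t}}$ is equivalent to $\frac{\epsilon}{2(1-\epsilon)} < \bar{t}$, and $n \geq 2(\bar{t} - \frac{\epsilon}{2(1-\epsilon)})^{-2}\log(2/\delta)$ is equivalent to $\sqrt{\frac{2\log(2/\delta)}{n}} \leq \bar{t} - \frac{\epsilon}{2(1-\epsilon)}$; together they yield $t^{*} \leq \bar{t}$, so $R(t^{*})$ is defined and \eqref{eqn:R_condition} applies at $t = t^{*}$. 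Since $R$ is non-decreasing and \eqref{eqn:R_condition} gives $Q_{R,F}(\tfrac{1}{2} + t^{*}) - m \leq R(t^{*})$ and $m - Q_{L,F}(\tfrac{1}{2} - t^{*}) \leq R(t^{*})$ for every $m \in m_1(F)$, it suffices to show that, with probability at least $1-\delta$, the empirical median lies in $[Q_{L,F}(\tfrac{1}{2} - t^{*}),\, Q_{R,F}(\tfrac{1}{2} + t^{*})]$; the two quantile bounds then upgrade this to $\sup_{m \in m_1(F)}|\hat{m} - m| \leq R(t^{*})$ simultaneously for all medians on the same event.

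For the upper tail I would set $a := Q_{R,F}(\tfrac{1}{2} + t^{*})$. The event $\{\hat{m} > a\}$ forces at least half of the contaminated samples to exceed $a$, and since each corrupted draw can exceed $a$ regardless of the arbitrary adversarial value $Z_i$, the count of $\tilde{Y}_i > a$ is at most $\sum_i W_i$ with $W_i := B_i + (1-B_i)\mathbf{1}\{Y_i > a\}$. Here the oblivious/prescient independence structure is essential: because the pairs $(Y_i, B_i)$ are independent across $i$ and $Y_i \perp B_i$, the $W_i$ are independent $\{0,1\}$ variables with $\mathbb{E}[W_i] = \epsilon + (1-\epsilon)\,\mathbb{P}(Y_i > a) \leq \epsilon + (1-\epsilon)(\tfrac{1}{2} - t^{*})$, where the last step uses $\mathbb{P}(Y > a) \leq \tfrac{1}{2} - t^{*}$ from the right-quantile definition. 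A short calculation with the chosen $t^{*}$ collapses this mean to at most $\tfrac{1}{2} - (1-\epsilon)\sqrt{\tfrac{2\log(2/\delta)}{n}}$, so the gap between the threshold $n/2$ and $\mathbb{E}[\sum_i W_i]$ is at least $(1-\epsilon)\sqrt{2n\log(2/\delta)}$. Hoeffding's inequality for bounded independent summands then gives $\mathbb{P}(\hat{m} > a) \leq \mathbb{P}(\sum_i W_i \geq n/2) \leq \exp(-4(1-\epsilon)^2\log(2/\delta)) \leq \delta/2$, the final step using $\epsilon < \tfrac{1}{2} \Rightarrow 4(1-\epsilon)^2 \geq 1$. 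The lower tail is handled identically with $b := Q_{L,F}(\tfrac{1}{2} - t^{*})$, the worst-case adversary now pushing corrupted samples below $b$ and $\mathbb{P}(Y \geq b) \geq \tfrac{1}{2} + t^{*}$ coming from the left-quantile definition.

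The main obstacle is the reduction from the empirical-median event to a clean sum of independent indicators while absorbing the unrestricted contamination: every corrupted sample must be charged its worst-case contribution (a full unit toward exceeding $a$), and this must be done without destroying independence of the $W_i$, which is precisely why the argument needs $Y_i \perp B_i$ and cannot be run verbatim for the malicious adversary. The remaining work is bookkeeping: confirming that the specific form of $t^{*}$ closes the Hoeffding gap at $\delta/2$ on each side, and checking the order-statistic edge cases (odd versus even $n$) so that $\{\hat{m} > a\} \subseteq \{\sum_i W_i \geq n/2\}$ holds as stated. A union bound over the two tails then delivers the claimed probability $1-\delta$.
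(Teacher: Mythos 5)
The paper never proves this lemma itself: it is imported verbatim as Lemma~7 of \cite{altschuler}, so there is no internal proof to compare against. Your blind proof is correct and self-contained. The admissibility check is exactly right ($\epsilon < \frac{2\bar{t}}{1+2\bar{t}} \Leftrightarrow \frac{\epsilon}{2(1-\epsilon)} < \bar{t}$, and the sample-size hypothesis $\Leftrightarrow \sqrt{2\log(2/\delta)/n} \leq \bar{t} - \frac{\epsilon}{2(1-\epsilon)}$), and it is precisely what licenses applying \eqref{eqn:R_condition} at $t = t^{*}$. The containment $\{\hat{m} > a\} \subseteq \{\sum_i W_i \geq n/2\}$ with $W_i = B_i + (1-B_i)\mathbf{1}\{Y_i > a\}$ holds for both parities of $n$; the $W_i$ are i.i.d.\ Bernoulli because the lemma's independence hypotheses make each $W_i$ a function of $(Y_i, B_i)$ alone, with the arbitrary $Z_i$ never entering the bound; and your mean and exponent computations check out: $\mathbb{E}[W_i] \leq \tfrac{1}{2} - (1-\epsilon)\sqrt{2\log(2/\delta)/n}$, so Hoeffding gives $\exp\bigl(-4(1-\epsilon)^2\log(2/\delta)\bigr) \leq \delta/2$ per tail, using $\epsilon < \tfrac{1}{2}$, which indeed follows from $\epsilon < \frac{2\bar{t}}{1+2\bar{t}} < \tfrac{1}{2}$. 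This quantile-exceedance-counting argument is essentially the mechanism behind the cited result, and your closing remark—that the factorization $\mathbb{E}[W_i] = \epsilon + (1-\epsilon)\mathbb{P}(Y_i > a)$ is exactly what fails when $Y_i$ and $B_i$ may be coupled—correctly identifies why the malicious-adversary counterpart (Lemma \ref{lem:emp_bound2}) carries the weaker bias term $R(\epsilon)$ in place of $R\bigl(\frac{\epsilon}{2(1-\epsilon)}\bigr)$.
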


\begin{lem} \label{lem:emp_bound2} (Upper bound on empirical median deviation for malicious adversary) \citep[Lemma~8]{altschuler}.
Let $\bar{t} \in (0, \frac{1}{2})$, $\epsilon \in (0, \bar{t})$, $\delta \in (0,1)$ and $F \in \rev{C_{R,\bar{t}}}$, \rev{where $R$ is a non-decreasing function defined on domain $[0, \bar{t}]$}. Let $(Y_i, B_i)$ pairs be independently drawn for $i \in [n]$ with marginals $Y_i \sim F$ and $B_i \sim Ber(\epsilon)$. Let $\{Z_i\}_{i \in [n]}$ be arbitrary random variables possibly depending on $\{Y_i, B_i\}_{i \in [n]}$, and $\tilde{Y}_i = (1- B_i)Y_i + B_iZ_i$.  Then for $n \geq 2(\bar{t} - \epsilon)^{-2} \log(\frac{3}{\delta})$:
\begin{align*}
& \mathbb{P} \Big(\sup_{m \in m_1(F)} |\hat{m} (\tilde{Y}_1, ..., \tilde{Y}_n) - m| \leq R \Big(\epsilon+ \sqrt{\frac{2 \log(3/\delta)}{n}} \Big) \Big) \geq 1 - \delta  ~.
\end{align*}
\end{lem}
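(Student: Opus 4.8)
The plan is to reduce the two-sided median deviation to a pair of quantile-crossing events, each controlled by a Hoeffding bound engineered to survive the malicious coupling between $Y_i$ and $B_i$. Set $\gamma := \sqrt{2\log(3/\delta)/n}$ and $t := \epsilon + \gamma$. First I would note that the sample-size hypothesis $n \ge 2(\bar t - \epsilon)^{-2}\log(3/\delta)$ is exactly equivalent to $t \le \bar t$, so $R(t)$ is well defined and the membership $F \in C_{\bar t, R}$ may be invoked at argument $t$. By Definition~\ref{def:class_def}, every $m \in m_1(F)$ satisfies $m - R(t) \le Q_{L,F}(1/2 - t)$ and $Q_{R,F}(1/2 + t) \le m + R(t)$. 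Hence it suffices to prove that, with probability at least $1-\delta$,
\[
Q_{L,F}(\tfrac{1}{2} - t) \le \hat m(\tilde Y_1,\dots,\tilde Y_n) \le Q_{R,F}(\tfrac{1}{2} + t),
\]
since this sandwiches $\hat m$ inside $[m - R(t), m + R(t)]$ simultaneously for every $m \in m_1(F)$, which is the claimed sup-bound.

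For the upper inequality, write $q_+ := Q_{R,F}(1/2 + t)$ and observe that $\hat m > q_+$ forces at least $n/2$ of the observations to exceed $q_+$. The decisive step — the only place the malicious coupling enters — is the deterministic, realization-wise bound
\[
\mathbf{1}[\tilde Y_i > q_+] \le \mathbf{1}[Y_i > q_+,\, B_i = 0] + B_i,
\]
which holds no matter how the adversary selects $Z_i$: a corrupted coordinate is charged in full to $B_i$, so the arbitrary dependence of $Z_i$ on the entire history never appears. Summing, the count $N_{>q_+} := |\{i : \tilde Y_i > q_+\}|$ is dominated by $\sum_i \mathbf{1}[Y_i > q_+, B_i = 0] + \sum_i B_i$, two sums of independent $[0,1]$-valued terms whose expectations are at most $n(1/2 - t)$ and exactly $n\epsilon$; here I use $F(q_+) \ge 1/2 + t$ together with $\mathbb P(Y_i > q_+, B_i = 0) \le \mathbb P(Y_i > q_+) \le 1/2 - t$, for which no factor $(1-\epsilon)$ is available under the coupling.

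I would then apply Hoeffding's inequality to each sum separately, allotting each a deviation of $\sqrt{(n/2)\log(3/\delta)}$ so that each fails with probability at most $\delta/3$; on the intersection, $N_{>q_+} \le n(1/2 - t + \epsilon) + \sqrt{2n\log(3/\delta)} \le n/2$, whence $\hat m \le q_+$. Splitting $\gamma$ evenly between the contamination-count event and the clean-count event is exactly what produces the $\sqrt{2}$ in the deviation term. The lower inequality is symmetric, using $\mathbf{1}[\tilde Y_i < q_-] \le \mathbf{1}[Y_i < q_-, B_i = 0] + B_i$ with $q_- := Q_{L,F}(1/2 - t)$, and it reuses the same contamination-count event. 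A union bound over three events — one shared contamination count, one clean-above count, one clean-below count — gives total failure probability $3 \cdot (\delta/3) = \delta$, which is the origin of the $3/\delta$ inside the logarithm.

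The main obstacle is the malicious coupling itself: unlike the prescient and oblivious setting of Lemma~\ref{lem:emp_bound1}, here $B_i$ may be chosen as an adversarial function of $Y_i$, so one cannot factor $\mathbb P(Y_i > q_+, B_i = 0)$ as $(1-\epsilon)\mathbb P(Y_i > q_+)$ and recover the smaller bias $\epsilon/(2(1-\epsilon))$; the worst-case alignment forces the larger bias $\epsilon$. The realization-wise indicator bound above is the device that neutralizes this coupling cleanly. The only remaining care is routine boundary bookkeeping for even $n$, where $\hat m$ is the average of the two central order statistics, so that the threshold count $N_{>q_+} \le n/2$ must be sharpened to a strict inequality; this is absorbed by the ample slack in the Hoeffding bounds.
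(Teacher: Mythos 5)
You should first note that the paper never proves this statement: it is imported verbatim from \cite[Lemma~8]{altschuler} and used as a black-box tool, so there is no in-paper proof to compare against. Judged on its own merits, your proof is correct and is essentially the standard argument behind the cited result. The reduction to quantile crossings is sound: the sample-size hypothesis is exactly $t = \epsilon + \gamma \le \bar{t}$ with $\gamma = \sqrt{2\log(3/\delta)/n}$, which licenses invoking Definition~\ref{def:class_def} at argument $t$, and sandwiching $\hat{m}$ between $Q_{L,F}(\tfrac{1}{2}-t)$ and $Q_{R,F}(\tfrac{1}{2}+t)$ does control the supremum over all of $m_1(F)$ simultaneously. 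The realization-wise bound $\mathbf{1}[\tilde{Y}_i > q_+] \le \mathbf{1}[Y_i > q_+, B_i = 0] + B_i$ is indeed the key device: it holds pointwise regardless of $Z_i$, and you correctly identify that under malicious coupling only $\mathbb{P}(Y_i > q_+, B_i = 0) \le \mathbb{P}(Y_i > q_+) \le \tfrac{1}{2}-t$ is available, which is precisely why the bias degrades from $R(\epsilon/(2(1-\epsilon)))$ to $R(\epsilon)$. The arithmetic checks out: each sum is a sum of terms independent across $i$ (independence between the two sums is not needed, since you union-bound), two Hoeffding deviations of $\sqrt{(n/2)\log(3/\delta)}$ sum to $n\gamma$, each tail event has probability at most $\delta/3$, and the three-event union bound (clean-above, clean-below, one shared contamination count) yields failure probability $\delta$ and explains the $\log(3/\delta)$. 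One clarification on your closing remark: the strict inequality needed for even $n$ is not absorbed by ``ample slack'' --- your count bound $n(\tfrac{1}{2}-t+\epsilon)+n\gamma = n/2$ is an exact equality, so there is no slack at all. The clean fix is to define the good events as the complements of the Hoeffding tail events $\{S \ge \mathbb{E}S + s\}$, which read $S < \mathbb{E}S + s$ and therefore give $N_{>q_+} < n/2$ outright; with that phrasing the argument is complete.
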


In the expressions above, we observe that in the limiting case as $n \rightarrow \infty$, the difference between sample median and median is upper bounded by $D := R(\frac{\epsilon}{2(1- \epsilon)})$ for oblivious and prescient adversaries and $D := R(\epsilon)$ for malicious adversary. An intriguing question is whether these bounds can be improved. The answer is negative, as \cite[Corollary~6 \& Lemma~9]{altschuler} shows that there exists reward and contamination distributions for which $D$ is unavoidable. Therefore, as in \cite{altschuler}, we call $D$ the unavoidable bias term. 

Note that the above lemmas can be used for bounding the deviation of empirical median from the median of interest since median of interest is also a median of the given distribution. In the rest of the paper, we will simply refer to {\em median of interest} as the {\em median} and the {\em median of interest vector} as the {\em median vector}.

The results presented above are very general. In the examples below, we show how $R$ can be defined for specific families of distributions. Below, we provide a suitable $R$ for subgaussian distributions, which are commonly used in bandit problems. 

\begin{examp}\label{examp:subgaussian}
 All $\sigma$-subgaussian distributions are members of the family $C_{R,\bar{t}}$, where $\bar{t} \in (0,1/2)$ and 
\begin{equation}\label{eqn:R-subgaussian}
    R(t) = \sigma \sqrt{2}\bigg(\sqrt{ \log \left (\frac{1}{1/2-t} \right) }+ \sqrt{\log (2)}\bigg) ~.
\end{equation}
\end{examp}

Derivation of \eqref{eqn:R-subgaussian} can be found in Appendix \ref{appendix:R-subgaussian}.

Another interesting case, which allows sharper bounds is the family of distributions whose cdfs increase linearly around the median (not too flat around the median). 

\begin{defn} \label{defn:linear} 
\cite[Definition~10]{altschuler}
Given $\bar{t} \in (0,1/2)$ and $B>0$, define \rev{${\cal F}_{ B, \bar{t}}$} as the family of distributions $F$ with a unique median such that for all $x_1, x_2 \in [Q_{L,F}(1/2-\bar{t}), Q_{R,F}(1/2+\bar{t})]$
\begin{align*}
    |F(x_1) - F(x_2)| \geq \frac{1}{B m_2(F)} |x_1 - x_2| ~.
\end{align*}
\end{defn}

As noted in \cite{altschuler}, (i) any univariate Gaussian distribution is in \rev{${\cal F}_{ B, \bar{t}}$} for any $\bar{t} \in (0,1/2)$ and $B \geq q_{3/4} / \phi(q_{1/2+\bar{t}})$, where $\phi$ is the standard Gaussian density and $q_{\alpha}$ is its $\alpha$ quantile, (ii) any uniform distribution defined on an interval is in \rev{${\cal F}_{ B, \bar{t}}$} for any $\bar{t} \in (0,1/2)$ and $B \geq 4$. Moreover,  all distributions $F$ in $C_{R,\bar{t}}$ given in Definition \ref{defn:linear} satisfy Definition \ref{def:class_def} with $R(t) = B m_2(F) t$.

\subsection{Multi-objective Suboptimality Gap}\label{sec:suboptimality definition}

The number of samples required to distinguish an arm $i \notin P^*$ from an arm $j \in P^*$ depends on distance between arms $i$ and $j$. We quantify this distance by the notion of suboptimality gap.

\begin{defn}\label{def:suboptimality}
We define $\Delta_{i, j} := \max \left\{ 0,  \min_{d} (m_j^d - m_{i}^d ) \right\}$ as the suboptimality gap of an arm $i$ with respect to arm $j$ and $\Delta_i := \max_{j \in P^{*}} \Delta_{i, j}$ as the suboptimality gap of arm $i$. 
\end{defn}

$\Delta_i$ measures how much arm $i$ is dominated by the Pareto set. 
Given a positive real number $\alpha$, we call an arm $i$ $\alpha$-optimal if $\Delta_i \leq \alpha$, and $\alpha$-suboptimal if $\Delta_i > \alpha$. All Pareto optimal arms are $\alpha$-optimal for any positive real number $\alpha$ since $\Delta_j = 0$ for a Pareto optimal arm $j$.

Due to the unavoidable bias, in general, it is not possible to detect Pareto optimal arms with more than $2D$ accuracy, as proven in the following remark, whose details can be found in Appendix \ref{appendix:best_accuracy}.

\begin{lem}\label{rem:best_accuracy}
Suppose that an adversary can alter a reward sample as much as $D$ so that either $\lim_{n \to \infty}  \hat{m}(\tilde{Y}_{i,1}^d, \cdots , \tilde{Y}_{i,n}^d)  = m + D$ or $\lim_{n \to \infty} \hat{m}(\tilde{Y}_{i,1}^d, \cdots ,\tilde{Y}_{i,n}^d)  = m - D$ holds for $i \in [K]$. Then, given any $\zeta>0$, there are bandit environments in which it is impossible to distinguish $(2D - \zeta)$-optimal arms from the Pareto optimal arms.  

\end{lem}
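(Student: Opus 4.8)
The plan is a standard two-environment indistinguishability (two-point) argument. I will construct two bandit instances $A$ and $B$ whose contaminated observations are \emph{identically distributed} for every arm, but in which one fixed arm is a non-Pareto, $(2D-\epsilon)$-suboptimal arm in $A$ and a Pareto optimal arm in $B$. No algorithm, however it adaptively allocates pulls, can tell $A$ from $B$, so it must produce the same classification of that arm in both, and hence be wrong in at least one of them; this is exactly the asserted impossibility.

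It suffices to take $M=1$ and two arms (the general case follows by replicating the objective). I design the observed law first: fix a contaminated cdf $\tilde F$ with median $c$, and let $F^{-}(x)=\min\{1,\tilde F(x)/(1-\epsilon)\}$ and $F^{+}(x)=\max\{0,(\tilde F(x)-\epsilon)/(1-\epsilon)\}$ be the upper and lower envelopes of all true laws consistent with $\tilde F$ at contamination level $\epsilon$. By construction $\tilde F=(1-\epsilon)F^{-}+\epsilon G^{\mathrm{up}}=(1-\epsilon)F^{+}+\epsilon G^{\mathrm{down}}$ for valid contamination cdfs, so $F^{+}$ and $F^{-}$ share \emph{exactly} the same contaminated law; this makes the matching of full observation distributions (not merely of medians) automatic. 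Here $F^{-}$ has the smallest and $F^{+}$ the largest median among admissible true laws, with $m_1(F^{-})<c<m_1(F^{+})$, and the two shifts $c-m_1(F^{-})$ and $m_1(F^{+})-c$ are each at most the unavoidable bias $D$. Choosing $\tilde F$ so that both envelopes lie in $C_{R,\bar{t}}$ and nearly saturate the quantile-deviation bound of Definition~\ref{def:class_def} forces each shift toward $D$, giving a gap $m_1(F^{+})-m_1(F^{-})=2D-\epsilon$; this is where the slack $\epsilon$ is spent, since $D$ is a supremum.

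Now place arm $1$ at $F^{+}$ and arm $2$ at $F^{-}$ in environment $A$, and swap them in $B$. In $A$ arm $1$ dominates arm $2$ with gap $2D-\epsilon$, so $P^{*}=\{1\}$ and $\Delta_2=2D-\epsilon$, making arm $2$ a non-Pareto $(2D-\epsilon)$-suboptimal arm; in $B$ arm $2$ dominates arm $1$, so $P^{*}=\{2\}$ and arm $2$ is Pareto optimal. In both environments each arm emits samples from the same law $\tilde F$, and since the envelope contaminations are fixed a priori the construction already realizes the weakest (oblivious) adversary, hence applies to all three models. Because the contamination is outcome-independent, the joint law of the entire pull-and-observation transcript coincides under $A$ and $B$ for any adaptive policy, so every PSI algorithm declares ``$2\in P$'' with the same probability in both. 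Correctly classifying arm $2$ requires excluding it in $A$ yet including it in $B$ --- impossible from identical data --- which proves the claim.

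I expect the only real difficulty to be the middle step: exhibiting a $\tilde F$ both of whose envelopes $F^{+},F^{-}$ lie in $C_{R,\bar{t}}$ while their median gap is arbitrarily close to $2D$. It is this combination --- exact equality of the full contaminated laws together with membership in the admissible family $C_{R,\bar{t}}$ at a near-extremal bias --- that makes the two instances simultaneously indistinguishable and admissible; once it is in hand, the two-point reduction adds nothing beyond the routine argument.
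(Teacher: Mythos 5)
Your two-environment plan is the standard rigorous route to an impossibility statement, and it is genuinely different from the paper's argument; unfortunately the step you defer as ``the only real difficulty'' is not a routine technicality but an actual obstruction, and it is exactly where the factor $2$ in $2D$ is lost. Suppose the contamination probability is $\epsilon$ (note you overload this symbol: in the lemma $\epsilon$ is the slack, while in your envelope formulas it is the contamination level) and write $t_0=\tfrac{\epsilon}{2(1-\epsilon)}$, so that $D=R(t_0)$ for the oblivious model. Exact equality of the contaminated laws forces the pointwise bound
\begin{equation*}
F^{-}(x)-F^{+}(x)\;\le\;\frac{\epsilon}{1-\epsilon}\;=\;2t_0 \qquad \text{for all } x,
\end{equation*}
as one checks directly from your envelope formulas (or from $(1-\epsilon)(F_a-F_b)=\epsilon(G_b-G_a)$ for any two admissible true laws). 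Now if $F^{-}\in C_{R,\bar t}$ and $2t_0\le\bar t$, Definition~\ref{def:class_def} applied at $t=2t_0$ gives $F^{-}(x)>\tfrac12+2t_0$ for every $x>m_1(F^{-})+R(2t_0)$, hence $F^{+}(x)\ge F^{-}(x)-2t_0>\tfrac12$ there, and so
\begin{equation*}
m_1(F^{+})-m_1(F^{-})\;\le\;R(2t_0).
\end{equation*}
Thus your construction can reach a gap of $2D-\text{slack}$ only if $R(2t_0)\ge 2R(t_0)-\text{slack}$. This fails precisely for the paper's flagship Example~\ref{examp:subgaussian}: the subgaussian $R$ has $R(0)>0$, and for small $\epsilon$ one gets $2R(t_0)-R(2t_0)\approx R(0)>0$, i.e.\ the largest gap attainable with identical contaminated laws is about $D$, not $2D$. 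The same cap (with $R(2\epsilon)$) holds in the malicious model, where sharing an observed law is equivalent to total-variation distance at most $2\epsilon$ between the true laws. So no choice of $\tilde F$ completes your middle step; the two-point argument as you set it up proves impossibility only at the scale $R\big(\tfrac{\epsilon}{1-\epsilon}\big)$, not $2D-\epsilon$.

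The reason the paper does not run into this is that its lemma is a conditional statement and its proof uses the hypothesis you discarded: the adversary is \emph{assumed} able to drive the limiting empirical median to exactly $m\pm D$, and ``impossible to distinguish'' is meant relative to those empirical (median) results. The paper then simply takes two arms with $m_1^d-m_2^d=2D-\epsilon$ in every objective, biases the optimal arm down by $D$ and the suboptimal arm up by $D$, and observes that the limiting empirical medians reverse the true ordering, so no decision based on them can be correct in both readings of the data. Your version aims at unconditional, information-theoretic indistinguishability of the full observation processes, which is a strictly stronger claim --- and, by the computation above, one that is false at the $2D$ scale for the subgaussian family. To prove the lemma as stated you must either invoke its hypothesis as the paper does, or accept the weaker constant $R(2t_0)$ that your approach actually supports.
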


\subsection{Pareto Accuracy}\label{subsec:pareto accuracy}

In the following, we define the class of algorithms that is of interest to us in the adversarial MO-MAB setting.

\begin{defn}\label{dfn:pareto_accuracy} (Pareto accurate algorithm)
Suppose that the reward distributions belong to $C_{R, \bar{t}}$ given in Definition \ref{def:class_def} for some $\bar{t} \in (0, \frac{1}{2})$. Then, given the accuracy parameter $\alpha \geq 0$, the confidence probability $\,0 < \delta < 1$ and the adversarial attack probability $0 \leq \epsilon < \frac{2\bar{t}}{1+2\bar{t}}$ for oblivious and prescient adversaries, $0 \leq \epsilon < \bar{t}$ for malicious adversaries, we call an algorithm Pareto accurate in the adversarial MO-MAB setting, if the set of arms $P$ that the algorithm returns at termination satisfies the following conditions: 
\begin{enumerate}
    \item Accuracy: All the returned arms are $(2D+\alpha)$-optimal: 
    \begin{equation*}
    \forall i \in P, \; \Delta_i \leq 2D + \alpha ~.
    \end{equation*}
    \item Coverage: If a Pareto optimal arm $j$ is not in $P$, then, there exists at least one arm in $P$ that $(2D)$-covers arm $j$: 
    \begin{equation*}
    \forall j \in P^{*}, \; \exists i \in P: m_j - m_i \preceq 2D~.
    \end{equation*}
\end{enumerate}
\end{defn}

The Pareto accuracy can be considered as the generalization of the {\em probably approximately correct} (PAC) learning concept from the single objective MAB setting to the adversarial MO-MAB setting. However, in the adversarial MO-MAB setting, since the Pareto optimal arms cannot be distinguished from other $2D$-optimal arms as shown in Lemma \ref{rem:best_accuracy}, it is not possible to approximate the optimal solution arbitrarily well. This reflects itself in the accuracy and the coverage conditions defined above. We also note that, from the algorithms that are in the class defined above, the ones that have smaller sample complexity would be favorable since in many practical settings, taking samples induce some type of cost, e.g., monetary and time, that we would want to minimize.

\section{A Robust Learning Algorithm}  \label{sec:algorithm}

We propose a {\em Pareto accurate} algorithm called \textit{Robust Pareto Set Identification} (R-PSI) whose pseudocode is given in Algorithm 1. The algorithm sequentially operates over sampling phases indexed by $t \geq 0$. It keeps two sets of arms: the {\em undecided set} $S$ and the {\em predicted Pareto set} $P$, where $S \cup P = [K]$ for all $t$. \rev{A visualization of the R-PSI algorithm is provided in Figure \ref{fig:enter-label}.}

\begin{figure*}[!t]
    \centering
    \includegraphics[scale=0.7]{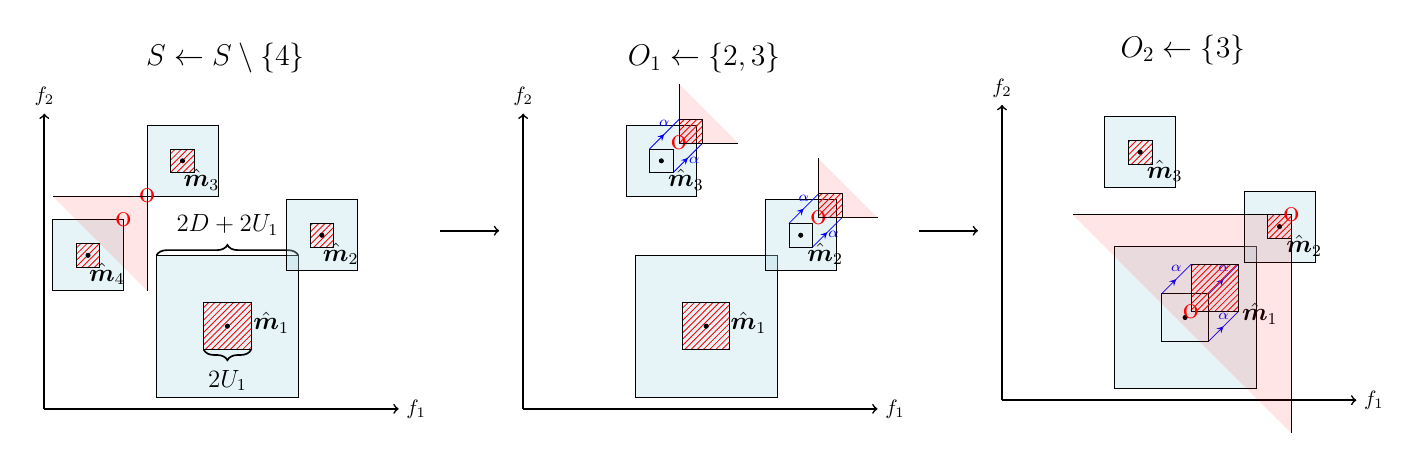}
    \vspace{-0.3cm}
    \caption{\rev{Visualization of R-PSI algorithm in two dimensional objective space. The red circles mark points used in comparisons. The blue squares are the confidence regions of arms given in Lemma \ref{lem:good_event}, whereas shaded squares are the smaller regions used in identification phase. The left figure visualizes the elimination phase, where arm $3$ eliminates arm $4$. The middle and right figures visualize the Pareto identification phase of the algorithm. Both arm $2$ and arm $3$ are added to $O_1$, as shown in the middle figure. However, arm $2$ is suspected to be useful in elimination of the arm $1$ in the future rounds, thus it is not added to $O_2$ as depicted in the right figure. As a result, only arm $3$ is added to the estimated Pareto set $P$.}}   
    \label{fig:enter-label}
\end{figure*}

At the beginning ($t=0$), all the arms are assigned to the {\em undecided set} $S$. Then each arm is sampled $n_0$ times, which is defined as:
\begin{equation}\label{eqn:init_samp}
    n_0 := \left\lceil{2\beta_{\bar{t}, \epsilon}\log \Big(\frac{\pi^2MK}{6\tilde{\delta}} \Big)}\right\rceil 
\end{equation}
where 
\begin{align} \label{eqn:def_delt_oblv}
    \beta_{\bar{t}, \epsilon} :=  \left(\bar{t} -\frac{\epsilon}{2(1- \epsilon)}\right)^{-2}, \ \tilde{\delta} := \frac{\delta}{2} 
\end{align}
for prescient and oblivious adversaries, and 
\begin{align} \label{eqn:def_delt_mal}
    \beta_{\bar{t},\epsilon} := {(\bar{t} - \epsilon)}^{-2}, \ \tilde{\delta} := \frac{\delta}{3}
\end{align} 
for malicious adversary.

For sampling phase $t \geq 1$, let $\tau_i$ represent the number of sampling phases so far in which arm $i$ is sampled. At each sampling phase $t \geq 1$, the algorithm selects the arm with the largest statistical bias which we denote by $i^{*}$. Computation of $i^{*}$ is closely linked with Lemmas \ref{lem:emp_bound1} and \ref{lem:emp_bound2}. In particular, given $R$, the statistical bias of any arm after $\tau$ sampling rounds represents the uncertainty in the median estimate attributed to sample size, and is computed as
\begin{align}
    U_{\tau} = R\left(h_{\epsilon} + 1/\sqrt{\beta_{\bar{t}, \epsilon}\tau}\right)- R(h_{\epsilon}) ~, \notag
\end{align}
where $h_{\epsilon} := \frac{\epsilon}{2(1 - \epsilon)}$ for oblivious and prescient adversary and $h_{\epsilon} := \epsilon$ for malicious adversary. Thus, we set 
\begin{align}
    U_{i} = U_{\tau_i} \label{eqn:statbias} ~,
\end{align}
and $i^{*}= \argmax_{k \in [K]} U_{k, \tau_k}$.

\begin{rem} \label{rem:sampling round diff}The difference between the sampling round numbers of any two arms in $S$ cannot be larger than $1$ since the algorithm selects the arm with the largest statistical bias at each sampling phase. 
	\begin{align*}
	\forall i,j \in S,\; |\tau_i - \tau_j| \leq 1 ~.
	\end{align*}
\end{rem}
After arm $i^*$ is selected, it is successively sampled $n_{\tau_{i^{*}}}$ times where
\begin{align*}
    n_{\tau_{i^{*}}} := 1  +  \Big\lceil 4\tau_{i^{*}}\beta_{\bar{t}, \epsilon}\log \Big(\frac{\tau_{i^{*}}}{\tau_{i^{*}}-1} \Big) + 2\beta_{\bar{t}, \epsilon} \log \Big( \frac{(\tau_{i^{*}}-1)^2MK\pi^2}{6 \tilde{\delta}} \Big) \Big\rceil ~.
\end{align*}

\begin{algorithm}[h!]
	\caption{R-PSI}\label{pseudocode}
        \begin{multicols}{2}
	\begin{algorithmic}[1]
	   \State \textbf{Input:} $\alpha$, $\delta$, $\epsilon$, $\bar{t}$, \rev{$R(\cdot)$}
		\State \textbf{Initialize:} $S = [K]$, $P = \emptyset$, $\tau_i=0 \,$ $\forall i \in [K]$, $t = 0$. 
        \State $\tau_i$ $\gets$ $\tau_i+ 1 $ $\forall i \in [K]$
        \State Sample each arm $n_0$ times as in (\ref{eqn:init_samp})
        \State Update $U_{i}$ according to (\ref{eqn:statbias}) $\; \forall i \in S$
        \State Update $\hat{m}_i$  $\; \forall i \in S$ 
        \begin{spacing}{0.5}
        \end{spacing}
        \While{$S \neq \emptyset$}
        \begin{spacing}{0.2}
        \end{spacing}
        
        \If{$t > 0$}
        \State \textbf{Sampling:} 
        \State Choose arm $i^{*}= \argmax_{k \in [K]} U_{k, \tau_k}$  
        \State $\tau_{i^{*}} \leftarrow \tau_{i^{*}} +1$
        \State Sample $i^{*}$ successively $n_{ \tau_{i^{*}}}$ times. 
        \State Update $U_{i^{*}}$ according to (\ref{eqn:statbias})
        \State Update $\hat{m}_i^{*}$ 
       
        \vspace{1mm}
        \EndIf

        \begin{spacing}{0.5}
        \end{spacing}
		\State \textbf{Elimination:} 
		\State $S \leftarrow  S \setminus \{i \in S|\, \exists j \in S\setminus \{i\} : \hat{m}_i + D + U_i \prec$ 
		\State \hspace{15mm} $\hat{m}_j - D - U_j \}$
        \begin{spacing}{0.5}
        \end{spacing}
        
		\State \textbf{Identification:}
		\State  $O_1 \leftarrow \{i \in S|\, \nexists j \in S\setminus \{i\}: \hat{m}_i - U_i + \alpha \preceq$ 
		\State \hspace{10mm} $ \hat{m}_j  + U_j\}$
		\vspace{2mm}
        \If{$\exists k \in S: U_k > \alpha/4$}
        \vspace{1mm}
		\State $O_2 \leftarrow \{i \in O_1|\, \nexists j \in S\setminus \{i\}: \hat{m}_j  - U_j + \alpha \preceq $
		\State \hspace{11mm} $\hat{m}_i + U_i \} $
		\vspace{1mm}
		\State $S \leftarrow S\setminus O_2$
		\vspace{1mm}
		\State $P \leftarrow P \cup O_2$ 
        \vspace{1mm}
        \Else:
        \State $P \leftarrow P \cup O_1$
        \State \Return $P$
        \vspace{1mm}
        \EndIf
    \vspace{1mm}
	\State $t \leftarrow t+ 1$
	\EndWhile
	\State \Return $P$
    
    \begin{spacing}{0.5}
    \end{spacing}
	\end{algorithmic}
 \end{multicols}
\end{algorithm}

As will be proven in Lemma \ref{lem:good_event}, $n_{\tau_{i^{*}}}$ is chosen in such a way that the sample complexity requirements of Lemma \ref{lem:emp_bound1} and Lemma \ref{lem:emp_bound2} are met and the resulting empirical median deviation bound of Lemma \ref{lem:emp_bound1} and Lemma \ref{lem:emp_bound2} depends on $\tau_{i^{*}}$ through the expression $R\left(h_{\epsilon}+ \frac{1}{\sqrt{\beta_{\bar{t}, \epsilon}\tau_{i^{*}}}}\right)$.

After each sampling phase, algorithm enters the elimination step where the arms that are guaranteed to be $(2D + \alpha)$-suboptimal are eliminated. As shown in Lemma \ref{lem:elimination guarantee}, none of the Pareto optimal arms can be eliminated at this step which is crucial for our theoretical analysis to hold since the {\em additive property of suboptimality} is not satisfied in the adversarial setting as shown in Remark 4 of \citet{altschuler}.

After the elimination step, algorithm enters the identification step where the arms that are guaranteed to satisfy the accuracy requirements are collected in $O_1$. Among these arms in $O_1$, the ones that can potentially eliminate an arm in $S$ at future rounds are dropped back to $S$. \rev{If these arms were to remain in $O_1$, the $(2D + \alpha)$-suboptimal arms to be eliminated by these arms will never be eliminated and might be returned in $P$ by the algorithm.}  This prevents arms that are $(2D + \alpha)$-suboptimal to potentially end up in $P$. The rest of the arms in $O_1$ are collected in $O_2$. 

Note that we have an if statement in the identification step that checks whether $U_k > \alpha/4$ and when $U_k \leq \alpha/4$, the algorithm terminates by moving all the arms in $O_1$ to $P$. This is to guarantee the termination of the algorithm as there might be some arms left in $S$ with suboptimality gaps between $(4D + \alpha)$ and $(2D + \alpha)$ that might cause algorithm to stuck in an infinite loop in the absence of this step. By Lemma \ref{lem:4D-elimination condition}, arms that are $(4D + \alpha)$-suboptimal are eliminated by the R-PSI algorithm in the earlier rounds. To align with this guarantee, we define $\bar{\Delta}_i:=\Delta_i-4 D$ as the adjusted suboptimality gap of arm $i$, indicating that arms with $\bar{\Delta}_i > \alpha$ are eliminated earlier due to their significant suboptimality.

\section{Accuracy and Sample Complexity Analysis}\label{sec:bounds}

In this section, we provide accuracy and sample complexity analysis for R-PSI. 

\subsection{Good Event}
We start by showing that the {\em good event} in which the sample median concentrates sufficiently around the true median occurs with high probability. The rest of our analysis is based on this {\em good event}. The details of this result can be found in Appendix \ref{appendix:good_event}.

\begin{lem}\label{lem:good_event}
Define $E$ as the event in which for all $i \in [K]$, $d \in [M]$ and $\tau_i \geq 1$, the following is satisfied: 
\begin{align*}
m_i^d + D + U_{\tau_i} \geq \hat{m}_i^d \geq m_i^d - D - U_{\tau_i} ~,
\end{align*}
or equivalently, $|\hat{m}_i^d- m_i^d| \leq D + U_{ \tau_i}$. Then:
\begin{align*}
\mathbb{P}(E) \geq 1 - \delta ~.
\end{align*}
\end{lem}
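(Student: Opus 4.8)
The plan is to realize the event $E$ as the intersection, over all arms $i \in [K]$, objectives $d \in [M]$ and sampling rounds $\tau \geq 1$, of the empirical-median concentration events supplied by Lemma~\ref{lem:emp_bound1} (oblivious and prescient adversaries) or Lemma~\ref{lem:emp_bound2} (malicious adversary), and then to bound $\mathbb{P}(E^c)$ by a union bound whose tail is summed via $\sum_{\tau\geq 1}\tau^{-2}=\pi^2/6$. The two definitions of $\tilde{\delta}$ and the $\pi^2 MK/6$ factors in the sample counts are precisely what make this union bound close to $\delta$.

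First I would pin down the cumulative sample count. Since $\hat{m}_i^d$ at round $\tau$ is formed from all samples drawn from arm $i$ across rounds $1,\dots,\tau$, let $N_\tau$ denote this total, with $N_1=n_0$ and $N_\tau=N_{\tau-1}+n_\tau$ for $\tau\geq 2$. Writing $A:=\pi^2 MK/(6\tilde{\delta})$, I claim that the per-round counts in \eqref{eqn:init_samp} and in the displayed formula for $n_{\tau_{i^*}}$ are exactly the increments of the sequence $2\beta_{\bar{t},\epsilon}\,\tau\log(A\tau^2)$; this is a short telescoping check in which the $4\tau\beta_{\bar{t},\epsilon}\log(\tau/(\tau-1))$ and $(\tau-1)^2$ terms reassemble into $2\beta_{\bar{t},\epsilon}[\tau\log(A\tau^2)-(\tau-1)\log(A(\tau-1)^2)]$. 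Absorbing the $+1$ and the ceilings into the inequality yields
\[
N_\tau \;\geq\; 2\beta_{\bar{t},\epsilon}\,\tau\,\log\!\big(A\tau^2\big)\qquad\text{for all }\tau\geq 1 .
\]

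Next, for each fixed triple $(i,d,\tau)$ I would invoke the relevant lemma on the $N_\tau$ contaminated samples $\{\tilde{Y}_{i,n}^d\}$ at a per-round confidence level $\delta_\tau:=2/(A\tau^2)$ for the oblivious and prescient cases (so that $\log(2/\delta_\tau)=\log(A\tau^2)$). The lower bound on $N_\tau$ then delivers both the sample-size hypothesis $N_\tau\geq 2\beta_{\bar{t},\epsilon}\log(2/\delta_\tau)$ (using $\tau\geq 1$) and the key inequality $\sqrt{2\log(2/\delta_\tau)/N_\tau}\leq 1/\sqrt{\beta_{\bar{t},\epsilon}\tau}$. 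Since $R$ is non-decreasing and $D+U_\tau=R(h_\epsilon+1/\sqrt{\beta_{\bar{t},\epsilon}\tau})$, Lemma~\ref{lem:emp_bound1} gives, with probability at least $1-\delta_\tau$,
\[
\sup_{m\in m_1(F_i^d)}\big|\hat{m}_i^d-m\big|\;\leq\; R\!\Big(h_\epsilon+\sqrt{\tfrac{2\log(2/\delta_\tau)}{N_\tau}}\Big)\;\leq\; R\!\Big(h_\epsilon+\tfrac{1}{\sqrt{\beta_{\bar{t},\epsilon}\tau}}\Big)\;=\;D+U_\tau ,
\]
and in particular $|\hat{m}_i^d-m_i^d|\leq D+U_\tau$ because $m_i^d\in m_1(F_i^d)$. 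Here I would also record the range checks that keep $\beta_{\bar{t},\epsilon}$ finite and the lemma applicable: $\tfrac{\epsilon}{2(1-\epsilon)}<\epsilon<\bar{t}$ and $\epsilon<\bar{t}<2\bar{t}/(1+2\bar{t})$. The malicious case is identical, using $h_\epsilon=\epsilon$, constant $3$ in place of $2$, and $\delta_\tau:=3/(A\tau^2)$ through Lemma~\ref{lem:emp_bound2}.

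Finally I would close with the union bound
\[
\mathbb{P}(E^c)\;\leq\;\sum_{i=1}^{K}\sum_{d=1}^{M}\sum_{\tau=1}^{\infty}\delta_\tau\;=\;MK\sum_{\tau=1}^{\infty}\frac{2}{A\tau^2}\;=\;MK\cdot\frac{2}{A}\cdot\frac{\pi^2}{6}\;=\;2\tilde{\delta}\;=\;\delta ,
\]
where the last equalities use $A=\pi^2 MK/(6\tilde{\delta})$ and $\tilde{\delta}=\delta/2$; the malicious case instead produces $3\tilde{\delta}=\delta$ with $\tilde{\delta}=\delta/3$. The step I expect to be the main obstacle is the telescoping identity behind the cumulative lower bound on $N_\tau$: matching the opaque $n_{\tau_{i^*}}$ expression to the clean target $2\beta_{\bar{t},\epsilon}\tau\log(A\tau^2)$ is the only nontrivial computation, whereas everything afterward is a direct application of the two borrowed lemmas together with a standard anytime union bound.
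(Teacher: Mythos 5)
Your proposal is correct and follows essentially the same route as the paper's own proof: a telescoping lower bound $N_\tau \geq 2\beta_{\bar{t},\epsilon}\,\tau\log\big(\pi^2MK\tau^2/(6\tilde{\delta})\big)$ on the cumulative sample count, an application of Lemma~\ref{lem:emp_bound1} or Lemma~\ref{lem:emp_bound2} at per-round confidence level proportional to $1/\tau^2$, monotonicity of $R$ to pass to $D+U_\tau$, and a union bound over $(i,d,\tau)$ summing $\sum_{\tau\geq 1}\tau^{-2}=\pi^2/6$ to recover $\delta$. Your write-up is in fact slightly cleaner than the paper's (whose displayed ``Simplifying R.H.S.'' inequality has a typographical $\leq$ that should read $\geq$); the only blemish is your side remark invoking $\epsilon<\bar{t}$ in the oblivious/prescient case, where only $\epsilon\leq 2\bar{t}/(1+2\bar{t})$ is assumed --- but that weaker assumption already gives $h_\epsilon=\tfrac{\epsilon}{2(1-\epsilon)}\leq\bar{t}$, which is all the argument needs.
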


\subsection{Main Results}

\rev{In this section, we present the theoretical analysis of R-PSI for the Pareto set identification problem in MO-MAB with contaminated reward observations. A key algorithmic innovation of R-PSI lies in its intricate set operations, which allow it to overcome termination condition issues that hinder previous methods. Through rigorous analysis tailored for the set operations of R-PSI, we prove that, using these set operations, R-PSI returns an ($\alpha,\delta$)-PAC Pareto set, providing strong accuracy guarantees (see Lemmata \ref{lem:elimination guarantee}–\ref{lem:coverage}). We then establish upper bounds on the sample complexity of R-PSI (see Theorem \ref{thm: sample complexity} and Corollaries \ref{cor:subgaussian_sample} and \ref{cor:linear_sample}). These sample complexity upper bounds rely on the novel termination condition of R-PSI (line 22 of Algorithm \ref{pseudocode}), which introduces a new approach to determining when the algorithm should stop. This termination criterion leverages a statistical bias term, $U_k$, to ensure that sufficient information has been gathered about the arms before halting. In the proof of Lemmata \ref{lem:elimination guarantee}–\ref{lem:coverage}, we demonstrate that this termination condition is sufficient to guarantee that R-PSI is a Pareto accurate method. For the sample complexity upper bounds of subgaussian reward distributions (Corollary \ref{cor:subgaussian_sample}), we use the newly introduced $R(\cdot)$ established in Example \ref{examp:subgaussian} in Section \ref{sec:bias}. We also discuss the tightness of the upper bounds in Remark \ref{rem:lowerbound}.

Next, we state our first main result, which provides an accuracy guarantee for R-PSI and establishes a high-probability upper bound on its sample complexity.}

\begin{thm} \label{thm: sample complexity} 
Assume that the reward distributions belong to $C_{R, \bar{t}}\,$ given in Definition \ref{def:class_def} and the event $E$ defined in Lemma \ref{lem:good_event} holds. Then, given any $\alpha \in \mathbb{R}_{+}$, and $\epsilon \leq \frac{2\bar{t}}{1 + 2\bar{t}}$ for the oblivious and prescient adversary ($\epsilon \leq \bar{t}$ in the case of the malicious adversary), R-PSI is Pareto accurate with sample complexity $N$ bounded by: 
\begin{align*}
            N &\leq \sum_{i: \;\Delta_i > 4D + \alpha} \, 2\tau_{(\bar{\Delta}_i)}\left(\beta_{\bar{t}, \epsilon} \log\left(\frac{\tau_{(\bar{\Delta}_i)}^2MK\pi^2}{6\tilde{\delta}}\right) + 1\right) + \sum_{i : \; \Delta_i \leq 4D + \alpha}\, 2\tau_{(\alpha)}\left(\beta_{\bar{t}, \epsilon} \log\left(\frac{\tau_{(\alpha)}^2MK\pi^2}{6\tilde{\delta}}\right) + 1\right) \numberthis \label{eqn:temp2} \\[2mm]
            &\leq K \tau_{(\alpha)}\left(2\beta_{\bar{t}, \epsilon} \log\left(\frac{\tau_{(\alpha)}^2MK\pi^2}{6\tilde{\delta}}\right) + 2\right)  ~, \numberthis \label{eqn:sample_complexity_final} 
\end{align*}
where $\bar{\Delta}_i := \Delta_i - 4D$, and $\tau_{(a)} := \inf \{\tau: \; U_{\tau} \leq a/5 \}$ for $a \in \mathbb{R}_+$. 
\end{thm}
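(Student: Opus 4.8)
The whole argument is deterministic once we condition on the good event $E$ of Lemma \ref{lem:good_event}, which I would use throughout in its coordinatewise form $\hat m_i - D - U_{\tau_i} \preceq m_i \preceq \hat m_i + D + U_{\tau_i}$. The plan is to first translate each algorithmic test into a guaranteed inequality between the true median vectors and then dispatch accuracy, coverage, and the sample-complexity count in turn. The preliminary step is the elimination guarantee: if $i$ is removed in the elimination step by some $j$, then $\hat m_i + D + U_i \preceq \hat m_j - D - U_j$, and $E$ upgrades this to $m_i \preceq m_j$, so $i \notin P^{*}$. Hence no Pareto optimal arm is ever eliminated, and every $j \in P^{*}$ stays in $S \cup P$ until the end.

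For accuracy I would take an arm $i$ placed in $P$ (via $O_2$ in the main branch or via $O_1$ at termination) and bound $\Delta_{i,j}$ for each $j \in P^{*}$. Since $i \in O_1$ at that round, for every such $j$ that is still in $S$ the $O_1$ test furnishes a coordinate $d$ with $\hat m_i^d - U_i + \alpha > \hat m_j^d + U_j$, which under $E$ gives $m_j^d - m_i^d < 2D + \alpha$, hence $\Delta_{i,j} < 2D + \alpha$. The one subtlety is a Pareto optimal $j$ that was promoted to $P$ earlier; because $i$ is promoted only later it was still in $S$ at that earlier round, so the $O_2$ test satisfied when $j$ was promoted (applied to $k = i$) symmetrically yields a coordinate with $m_j^d - m_i^d < 2D + \alpha$. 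Maximizing over $j \in P^{*}$ gives $\Delta_i \le 2D + \alpha$.

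Coverage is the delicate part. At termination every remaining arm has $U \le \alpha/4$, and a Pareto optimal arm is dropped only if it lies in $S \setminus O_1$. For such a $j^{*}$ I would follow the domination chain $j^{*} = a_1 \to a_2 \to \cdots$, where $a_{k+1}$ witnesses $a_k \notin O_1$ through $\hat m_{a_k} - U_{a_k} + \alpha \preceq \hat m_{a_{k+1}} + U_{a_{k+1}}$. Each edge forces $\hat m_{a_k}^d < \hat m_{a_{k+1}}^d$ in every coordinate $d$ (the $\alpha$-gain dominates the two $U$-terms), so the chain is acyclic and terminates at some $a_\ell \in O_1 \subseteq P$. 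Telescoping the edge inequalities produces a cumulative slack of $-(\ell - 1)\alpha/2$, and the crucial observation is that this slack exactly absorbs the accumulated $U$-terms together with the two confidence widths, leaving $m_{j^{*}} - m_{a_\ell} \preceq 2D$; this is what stops the coverage error from growing with the chain length. (If instead the loop exits with $S = \emptyset$, every $j \in P^{*}$ was promoted via $O_2$ and covers itself.)

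For the sample complexity I would bound the final sampling round $\tau_i$ of each arm. Remark \ref{rem:sampling round diff} keeps all rounds within one of each other, and the $U_k > \alpha/4$ guard prevents any arm from being sampled past round $\tau_{(\alpha)}$. For a far arm with $\Delta_i > 4D + \alpha$ I would show elimination by round $\tau_{(\bar{\Delta}_i)}$: its dominating witness $j^{*} \in P^{*}$ satisfies $m_{j^{*}}^d - m_i^d \ge \Delta_i$ in every coordinate, which forces the $O_2$ test for $j^{*}$ to fail while $i \in S$, so $j^{*}$ cannot be promoted before $i$ is eliminated; once both reach a round with $U \le \bar{\Delta}_i/4$ the elimination test $\hat m_i + D + U_i \preceq \hat m_{j^{*}} - D - U_{j^{*}}$ is forced. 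Substituting the per-round count $N_{\tau_i} \le 2\tau_i(\beta_{\bar t,\epsilon}\log(\tau_i^2 MK\pi^2/(6\tilde\delta)) + 1)$ from the proof of Lemma \ref{lem:good_event}, splitting arms into the two gap regimes yields \eqref{eqn:temp2}, and replacing each $\tau_{(\bar{\Delta}_i)}$ by the larger $\tau_{(\alpha)}$ yields \eqref{eqn:sample_complexity_final}. The main obstacle throughout is the bookkeeping forced by the $O_2$ re-insertion step: one must repeatedly verify that the relevant witness arm is still in $S$ at the moment its test is invoked, both for the already-promoted Pareto optimal arms in the accuracy argument and for the dominating witness in the elimination bound, and one must keep the $U$-terms and confidence widths aligned so that the coverage telescoping closes at exactly $2D$.
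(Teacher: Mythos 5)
Your proposal is correct and follows essentially the same route as the paper's own proof: the elimination guarantee, the $O_1$/$O_2$-test accuracy argument (the paper's Lemmas \ref{lem:elimination guarantee}--\ref{lem:suboptimality guarantee}), the domination-chain coverage argument of Lemma \ref{lem:coverage} (your telescoping with the $-\alpha/2$ slack per edge is the same mechanism as the paper's induction showing $\hat{m}+U$ strictly increases along the chain), and the same split into the $\Delta_i > 4D+\alpha$ and $\Delta_i \leq 4D+\alpha$ regimes with the same per-round sample counts. Your blocking argument for far arms (the witness $j^{*}$ cannot pass the $O_2$ test while $i \in S$) is just the contrapositive of the paper's Lemma \ref{lem:O2 condition} combined with Lemmas \ref{lem:conditional_4D_elim_guarantee} and \ref{lem:4D-elimination condition}, so the two proofs match in substance.
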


The above theorem gives the most general expression for the sample complexity without making any further assumptions on the reward distributions. If a suitable $R$ can be determined, for the given reward distributions, then it is possible to derive an explicit gap-dependent bound on the sample complexity. Below, we provide such a result for subgaussian reward distributions.

\begin{cor}\label{cor:subgaussian_sample}
Suppose that the reward distributions are subgaussian with parameter $\sigma$. Then, the asymptotic sample complexity (as $\alpha \rightarrow 0$) is given by: 
\begin{align*}
\mathcal{O} \left( \frac{\sigma^2 K\beta_{\bar{t}, \epsilon}}{\alpha^2 \log\left( \frac{1}{1/2-h_\epsilon}\right)}\log\left(\frac{ \sigma^2 MK}{\alpha^2 \log\left( \frac{1}{1/2-h_\epsilon}\right)\tilde{\delta}}\right) \right) ~. \numberthis \label{eqn:subgaussian sample complexity}
\end{align*}
\end{cor}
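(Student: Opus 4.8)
The plan is to specialize the closed-form sample-complexity bound \eqref{eqn:sample_complexity_final} of Theorem \ref{thm: sample complexity} to the subgaussian family. Every quantity appearing in \eqref{eqn:sample_complexity_final} except $\tau_{(\alpha)}$ is already explicit, so the whole task reduces to estimating $\tau_{(\alpha)} = \inf\{\tau : U_\tau \le \alpha/4\}$ when $R$ is the rate function of Example \ref{examp:subgaussian}, and then substituting back. I would first record that $U_\tau = R\big(h_\epsilon + 1/\sqrt{\beta_{\bar t,\epsilon}\tau}\big) - R(h_\epsilon)$ is strictly decreasing in $\tau$, so $\tau_{(\alpha)}$ is the first index at which the statistical bias drops below $\alpha/4$; consequently it suffices to exhibit a single $\tau^\star$ of the claimed order with $U_{\tau^\star} \le \alpha/4$.

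The core step is inverting the relation $U_\tau \le \alpha/4$. Because the corollary concerns the regime $\alpha \to 0$, we have $\tau_{(\alpha)} \to \infty$ and hence $s := 1/\sqrt{\beta_{\bar t,\epsilon}\tau} \to 0$. Since $h_\epsilon < \bar t < 1/2$, the argument $h_\epsilon + s$ stays bounded away from the singularity of $R$ at $t = 1/2$ for all large $\tau$, so $R$ is smooth there and a first-order expansion is legitimate: $U_\tau = R'(h_\epsilon)\,s + \mathcal{O}(s^2)$. Solving $R'(h_\epsilon)\,s \lesssim \alpha/4$ gives $\tau_{(\alpha)} = \Theta\big(R'(h_\epsilon)^2 / (\alpha^2 \beta_{\bar t,\epsilon})\big)$, and I would control the $\mathcal{O}(s^2)$ remainder to convert this into a genuine upper bound on $\tau_{(\alpha)}$.

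Next I would make the $\epsilon$-dependence explicit. Differentiating the $R$ of \eqref{eqn:R-subgaussian} yields $R'(t)^2 = \tfrac{\sigma^2}{2}\,\big((1/2-t)^2\,(-\log(1/2-t))\big)^{-1}$, so $R'(h_\epsilon)^2 = \mathcal{O}\big(\sigma^2/(1/2-h_\epsilon)^2\big)$ after bounding the factor $-\log(1/2-h_\epsilon) \ge \log 2$ below by a constant. For both adversary types $1/2 - h_\epsilon = \Theta(1/2-\epsilon)$: indeed $1/2 - h_\epsilon = (1/2-\epsilon)/(1-\epsilon)$ in the oblivious and prescient cases where $h_\epsilon = \epsilon/(2(1-\epsilon))$, and $1/2 - h_\epsilon = 1/2 - \epsilon$ in the malicious case where $h_\epsilon = \epsilon$, with $1/(1-\epsilon) \in (1,2)$. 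Hence $R'(h_\epsilon)^2 = \mathcal{O}\big(\sigma^2/(1/2-\epsilon)^2\big)$ and therefore $\tau_{(\alpha)}\,\beta_{\bar t,\epsilon} = \mathcal{O}\big(\sigma^2/(\alpha^2(1/2-\epsilon)^2)\big)$.

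Finally I would substitute into \eqref{eqn:sample_complexity_final}, whose dominant term is $K\,\tau_{(\alpha)}\cdot 2\beta_{\bar t,\epsilon}\log\big(\tau_{(\alpha)}^2 MK\pi^2/(6\tilde\delta)\big)$. Using $\tau_{(\alpha)} = \Theta(\alpha^{-2})$ inside the logarithm gives $\log\big(\tau_{(\alpha)}^2 MK\pi^2/(6\tilde\delta)\big) = \mathcal{O}\big(\log(MK/(\alpha\tilde\delta))\big)$, since the contribution $\log(\alpha^{-4}) = 4\log(\alpha^{-1})$ is absorbed into $\log(MK/(\alpha\tilde\delta))$ up to a constant factor. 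Multiplying the $K$ in front, the bound $\tau_{(\alpha)}\beta_{\bar t,\epsilon} = \mathcal{O}\big(\sigma^2/(\alpha^2(1/2-\epsilon)^2)\big)$, and this logarithmic factor, and absorbing the fixed constant $\sigma^2$ into the $\mathcal{O}$, produces exactly \eqref{eqn:subgaussian sample complexity}. I expect the main obstacle to be the inversion of the second step: passing from the first-order estimate $U_\tau \approx R'(h_\epsilon)\,s$ to a rigorous upper bound on $\tau_{(\alpha)}$ requires uniform control of the Taylor remainder of the square-root-of-log map near $t = h_\epsilon$, which is exactly where the $-\log(1/2-t)$ term must be kept away from both $0$ and $\infty$.
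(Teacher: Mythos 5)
Your proposal is correct and shares the paper's overall skeleton---reduce everything to an upper bound on $\tau_{(\alpha)}=\inf\{\tau: U_\tau\le\alpha/4\}$ for the subgaussian $R$ of Example \ref{examp:subgaussian}, then substitute into \eqref{eqn:sample_complexity_final} and absorb $\log(\tau_{(\alpha)}^2)$ into $\log(MK/(\alpha\tilde\delta))$---but it performs the crucial inversion differently. The paper inverts $U_{\tau_{(\alpha)}-1}>\alpha/4$ \emph{exactly}: it squares the inequality between the two square-root-of-log terms, exponentiates to get $\tau_{(\alpha)}-1<\frac{1}{(1/2-h_\epsilon)^2\beta_{\bar t,\epsilon}}\big(1-e^{-c_1\alpha^2-c_2\alpha}\big)^{-2}$, and then applies the elementary bound $\frac{1}{1-e^{-x}}\le 1+\frac1x$, so no differentiability or remainder control is ever needed. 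You instead linearize, writing $U_\tau=R'(h_\epsilon)s+\mathcal{O}(s^2)$ with $s=1/\sqrt{\beta_{\bar t,\epsilon}\tau}$; this is more generic (it would work for any smooth rate function $R$, not just \eqref{eqn:R-subgaussian}), but the step you flag as the obstacle is real, because $R$ is \emph{convex} on $[0,1/2)$, so $R'(h_\epsilon)s$ \emph{under}estimates $U_\tau$ and the remainder must genuinely be controlled, uniformly in $\epsilon$ if the constant in front of $(1/2-\epsilon)^{-2}$ is to be absolute. A clean way to close this gap without second derivatives is to use convexity in the other direction: for $s\le(1/2-h_\epsilon)/2$ one has $U_\tau\le R'(h_\epsilon+s)\,s\le\frac{C\sigma}{1/2-h_\epsilon}\,s$ with an absolute constant $C$ (since $-\log(1/2-h_\epsilon-s)\ge\log 2$), so $U_\tau\le\alpha/4$ as soon as $s\le\min\{\frac{(1/2-h_\epsilon)\alpha}{4C\sigma},\frac{1/2-h_\epsilon}{2}\}$, which yields $\tau_{(\alpha)}\beta_{\bar t,\epsilon}=\mathcal{O}\big(\sigma^2/(\alpha^2(1/2-h_\epsilon)^2)\big)$ exactly as you claim. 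One genuine merit of your write-up over the paper's: the paper's own final display is stated in terms of $h_\epsilon$ while the corollary is stated in terms of $\epsilon$, and your observation that $1/2-h_\epsilon=(1/2-\epsilon)/(1-\epsilon)$ in the oblivious/prescient case (and $=1/2-\epsilon$ in the malicious case), hence $1/2-h_\epsilon=\Theta(1/2-\epsilon)$, supplies the conversion that the paper leaves implicit.
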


The sample complexity bound derived for subgaussian distributions in \eqref{eqn:subgaussian sample complexity} has a worst case asymptotic matching to the bounds derived in Theorem 4 of \citet{auer} and Theorem 3 of \citet{altschuler} in terms of $\alpha$-dependence. Also, this bound nearly matches, in the worst case, the adversary-free lower bound in Theorem 17 of \citet{auer}. We also note that unlike these studies, our bound contains an $\epsilon$-dependent factor that comes from the adversarial attack. 
Details about Theorem \ref{thm: sample complexity} and Corollary \ref{cor:subgaussian_sample} can be found in Appendices \ref{appendix:theorem_sample_complexity} and \ref{appendix:cor_subgaussian_sample}, respectively.

Based on Definitions \ref{def:class_def} and \ref{defn:linear}, if \rev{$F \in \mathcal{F}_{ B,\bar{t}}$}, then  \rev{$F \in \mathcal{C}_{ R,\bar{t}}$, where $R(t)=B m_2(F) t$} \citep{altschuler}.  In the next Corollary, we provide sample complexity analysis for reward distributions in  \rev{$F \in \mathcal{F}_{ B,\bar{t}}$}.

\begin{cor}\label{cor:linear_sample}
 Suppose that the reward distributions are from the family  \rev{$F \in \mathcal{F}_{ B,\bar{t}}$. Let $\bar{m}_2 \geq max_{i \in [K]} m_2(F)$ for $F \in \mathcal{F}_{ B,\bar{t}}$ be a known upper bound.  When $R(t)=B \bar{m}_2 t$ is used, R-PSI is Pareto accurate with sample complexity $N$ bounded by}

$$N \leq \sum^{K}_{i} \, 2  \left( \beta_{\bar{t}, \epsilon} +\frac{25 \bar{m}_2^2 B^2}{\left({\Delta^\alpha_i }\right)^2}  \right) \left( \log\left(\frac{ 
       \left(1 +\frac{25 \bar{m}_2^2 B^2}{ \beta_{\bar{t}, \epsilon} \left({\Delta^\alpha_i }\right)^2}   \right)    ^2MK\pi^2}{6\tilde{\delta}}\right) + 1\right) ~,$$
\end{cor}

where $\Delta^\alpha_i \coloneqq \max (\alpha, \bar{\Delta}_i)$. Notice that as $h_\epsilon$ approaches $\bar{t}$, the bound increases towards infinity. This is expected as $\bar{t}$ is the threshold for $h_\epsilon$. When $h_\epsilon>\bar{t}$, by Definition \ref{def:class_def}, $R(t)$ no longer bounds the maximum possible deviation from medians (\citep{altschuler}, Lemma 1). Hence distinguishing between the contaminated median and the true median becomes uncontrollable. This is reflected by the sample complexity requirements of Lemmata \ref{lem:emp_bound1} and \ref{lem:emp_bound2}. When $B$, $\rev{\bar{m}_2}$, $\bar{t}$ are taken as non-negative constants and $\epsilon = 0$, the sample complexity bound takes the form $\mathcal{O}\left( \frac{K}{\alpha^2} \log \left( \frac{MK}{\tilde{\delta}\alpha} \right)\right)$ which recovers the worst case bound from Theorem 4 of \citet{auer} for adversary free Pareto set identification in MO-MAB setting. Details about Corollary \ref{cor:linear_sample} can be found in Appendix \ref{appendix:linear_sample}.

\rev{

\begin{rem} \label{rem:lowerbound}
    To analyze the tightness of the upper bounds, we consider a specific problem instance where $M=1$. In this case, $P^*=\arg \max _i m_i$ and $\Delta_i=\max _j m_j-m_i$. Thus, the accuracy condition for Pareto accurate algorithms given in Definition \ref{dfn:pareto_accuracy} reduces to $\forall i \in P, m_i \geq \max _j m_j-2 D-\alpha$. The coverage condition reduces to $\forall j \in P^*, \exists i \in P: m_j-m_i \leq 2 D$. Success conditions considered in \cite{altschuler} require that any successful algorithm, for any $\alpha \geq 0$, should return a single arm $I$ such that $m_{i^*}-m_I-U_{i^*}-U_I \leq \alpha$, where $i^*=\arg \max _i m_i$. In terms of our notation, this condition is $m_I \geq \max _j m_j-2 D-\alpha$, which is equivalent to our accuracy condition. Thus, we can say that \cite{altschuler} studies a success condition that is weaker than ours. Any lower bound for the success condition of \cite{altschuler} also holds in our case. In particular, the lower bound stated in Theorem 18 of \cite{altschuler} holds, which is given by $\Omega \left( \sum_{i \in [K] \setminus \{i^*\} } \frac{1}{{\max \{{\Delta}_i - 2D, \alpha\}}^2} \log (\frac{1}{\delta}) \right)$. In terms of dependence on $\alpha$, both Corollary \ref{cor:subgaussian_sample} and \ref{cor:linear_sample} match this lower bound up to logarithmic terms. 
\end{rem}

}

\section{Numerical Results}\label{sec:experiments}

We compare our algorithm with Algorithm 1 from \citet{auer} which considers Pareto set identification problem for the adversary free MO-MAB. We name this algorithm {\em Auer-A1}. Auer-A1 is a Pareto accurate algorithm in the adversary free setting ($D=0$) and its ranking of the arms is based on the mean of the distributions instead of the median. We conduct experiments on MovieLens dataset \citep{harper2015movielens}, SW-LLVM dataset \citet{pmlr-v28-zuluaga13}, and data obtained from UVA/PADOVA Type 1 Diabetes Simulator \citep{simglucose_paper}. For the algorithms to be comparable, except MovieLens dataset, we consider Gaussian rewards so that the median and the mean are the same. However, the experiments on MovieLens dataset are an exception, as its reward distribution is categorical, meaning the mean and median values may differ. This discrepancy between the mean and median values might affect the performance of Auer-A1. To mitigate this effect, we choose as the arm set a subset of movies in MovieLens dataset such that the Pareto set in terms of median rewards are the same as the Pareto set in terms of mean rewards. \rev{We also compare our algorithm with a modification of Auer-A1 algorithm that uses median values instead of mean values and thus returns a median-based Pareto set. We call this median based method {\em Auer-A1-M}}.

Note that in \citet{auer}, the success condition differs from the accuracy and coverage requirements we use to define the Pareto accuracy in the adversarial MO-MAB setting. In particular, their success condition requires the predicted arms to be at least $\alpha$-accurate and all the Pareto optimal arms to be returned in the predicted set. In terms of our accuracy and coverage arguments, this success condition is equivalent to an $\alpha$-accuracy and $0$ margin coverage requirement. For a fair comparison, while evaluating Auer-A1 and \rev{Auer-A1-M}, we relax this success condition to $(2D + \alpha)$ accuracy and $(2D)$ coverage requirement, which are equivalent to the requirements set for R-PSI. In real-world applications, the reward distributions of the arms may not be available. To maintain realism in our experiments, in all experiments, we use the $R$ from Example \ref{examp:subgaussian}. This is because the subgaussian assumption covers the largest set of cases for which we have derived explicit theoretical results, making it more suitable when dealing with an unknown reward function.

Robust PSI can be viewed as a classification task where the true positives are accurate arms and false negatives are uncovered arms. To align with existing research on classification tasks and to provide a metric that is appropriate for the accuracy and coverage conditions as defined in Definition \ref{dfn:pareto_accuracy}, we define the robust analogue of F1 score and denote it by $r$-F1 score. Our definition is inspired by the $\epsilon$-F1 score introduced by \cite{karagozlu2024learning}, which is designed for approximate arm identification in the context of vector optimization. 
\begin{align*}
    r \text {-F1 } \coloneqq \frac{2\left|\Pi_r \cap P\right|}{2\left|\Pi_r \cap P\right|+\left|\Pi_{r \setminus P}\right|+\left|P \setminus \Pi_r\right|}~,
\end{align*}
where $\Pi_r$ is the set of $2D+\alpha$ optimal arms and $\Pi_{r\setminus P}$ is the set of Pareto optimal arms that is not covered by $P$. Note that $r \text {-F1 }=1$ if and only if the algorithm is Pareto accurate (see Definition \ref{dfn:pareto_accuracy}).
The experiments also utilize other metrics: SC (average sample complexity), RSR (ratio of successful runs), and RO (ratio of optimal arms). \rev{ RSR measures the ratio of experiments where the Pareto set is accurately identified under the adversarial success conditions defined in Definition \ref{dfn:pareto_accuracy}. A higher RSR score indicates that the algorithm achieves this task consistently. This metric is crucial in scenarios where robust performance across multiple trials is essential, since a high RSR score corresponds to achieving the success condition in most of the experiments. 
SC is the average number of samples required by the algorithm to return the estimated Pareto set. A lower SC indicates greater efficiency, as it means the algorithm achieves its result with fewer arm pulls. Comparing SC across different algorithms provides an understanding of the trade-offs between accuracy and efficiency. For example, an algorithm might achieve high accuracy but require significantly more samples. 
RO is the ratio of the number of Pareto optimal arms returned by the method to the total number of true Pareto optimal arms, i.e., $|P^* \cap P|/ |P^*|$. This metric tells us how much of the true Pareto arms were returned. A high RO metric means that the algorithm is returning most of the arms in the true Pareto set. However, this metric alone cannot capture how accurate an algorithm is since an algorithm might achieve a perfect RO score, while returning too many arms that are not true Pareto arms. RO metric might seem redundant since r-F1 metric also considers the recall of the algorithm, but it is still crucial while comparing algorithm performances. For example, in a setting where there are only a few arms in the true Pareto optimal set, an algorithm which returns all ($2D+\alpha$)-optimal arms except the true Pareto arms will achieve a high r-F1 score, but RO will be $0$ since none of the true Pareto arms were returned.}

\subsection{Experiments on MovieLens Dataset}

Review bombing is an internet phenomenon where a large group of accounts post negative reviews or ratings for a product, service, or content as a form of protest or manipulation. This practice can significantly skew public perception and has prompted many review platforms to develop methods to mitigate its effects. We consider the movie reviews on MovieLens, a movie recommendation service. We select 24 movies and assign each of them to an arm. When an arm is pulled, a randomly selected review of that arm is observed. The objective of the optimization is to find the Pareto arms in terms of the review scores across five age demographics: $0-18,19-25,26-35,36-45,46+$. (i.e. $M=5$). To simulate the task of Pareto identification under an event of review bombing, we choose a prescient adversary that replaces the scores of the Pareto optimal arms with the lowest review score (i.e., $1$) in contaminated objectives and leaves the non-Pareto arms as is.
We choose $\sigma = 0.2$ in \eqref{eqn:R-subgaussian}, $\delta= 0.1$, and $\alpha= 0.2$. We use $\bar{t} = 0.49$ and maximum $\epsilon$ value of $0.4$. We report the average results over $100$ runs in Table \ref{tbl:all experiments}. The results indicate that Auer-A1's success rate declines as adversaries get stronger, whereas R-PSI consistently makes accurate predictions with fewer samples. \rev{Though Auer-A1-M returns an accurate and covering Pareto set, it misses out some of the true Pareto designs indicated by the low RO score.}

\subsection{Experiments on SW-LLVM Dataset}
We use SW-LLVM dataset from \citet{pmlr-v28-zuluaga13} which consists of 1023 compiler settings characterized by 11 binary features. The objectives are performance and memory footprint of some software when compiled with these settings. Similar to \citet{auer}, to obtain a stochastic-like data for our algorithm, we use the combinations of 4 of the binary features to form 16 arms. By ignoring all the other features we end up with 64 data points for each arm. When an arm is pulled, one of the 64 data points pertaining to that arm is randomly selected and the corresponding objectives are returned as reward. We normalize the data to obtain a similar range for both objectives. We assume that the reward distributions are Gaussian. Note that this assumption is for the purpose of determining the parameters of the algorithm and does not affect the rewards obtained from arm pulls in any way. We take the mean of 64 data points assigned to an arm and use this as the mean (median) of the corresponding Gaussian reward distribution. We select a malicious adversary that is 25\% more likely to contaminate Pareto arms compared to non-Pareto arms, while also preserving the marginal distribution $\operatorname{Ber}(\epsilon)$. The contamination has a value of $\pm 10$.
The choice for the parameters $\sigma$, $\delta$, and $\bar{t}$ are the same as in the previous setting, while $\alpha$ was chosen to be $0.1$. We report the average results over 100 runs in Table \ref{tbl:all experiments}. It can be seen that contaminations as low as $\epsilon = 0.05$ prevent Auer-A1 from being a Pareto accurate algorithm. \rev{Although the Auer-A1-M method achieves higher scores compared to its mean-based counterpart, these scores remain low, indicating that Auer-A1-M exhibits significant limitations under adversarial conditions. }


    \begin{table*}[t]	
	\setlength{\tabcolsep}{5pt}
	\renewcommand{\arraystretch}{1} 
	\caption{Experiments on real world datasets. From left to right: MovieLens, SW-LLVM dataset and UVA/PADOVA simulator experiment results.} \label{tbl:all experiments}
	\centering
	\small
 \rev{
    \begin{tabular}{c c| cccc c| cccc c| cccc}
    
    \multicolumn{16}{c}{} \\
    \multicolumn{2}{c|}{} & \multicolumn{4}{c}{\textbf{MovieLens}} & \multicolumn{1}{c|}{} & \multicolumn{4}{c}{\textbf{SW-LLVM}} &
    \multicolumn{1}{c|}{} & \multicolumn{4}{c}{\textbf{UVA/PADOVA}} \\\cline{4-5} \cline{9-10} \cline{14-15}
     \multicolumn{2}{c|}{} & \multicolumn{4}{c}{} & \multicolumn{1}{c|}{} & \multicolumn{4}{c}{} &
    \multicolumn{1}{c|}{} & \multicolumn{4}{c}{} \\
    
    \multicolumn{1}{c}{} &  $\epsilon$ &   RSR &   SC &  RO&   r-F1& \multicolumn{1}{c|}{} & RSR &  SC&  RO& r-F1& \multicolumn{1}{c|}{} & RSR &      SC&  RO&   r-F1\\ 
    \cmidrule{1-16} 
    
\multirow{6}{*}{R-PSI}
&  0.0  & 1.0 & 18674.7 & 1.0 & 1.0  &&  1.0   &   1540.9 &   1.0 &  1.0 &&  1.0 & 3951.8  & 1.0  & 1.0\\
&0.05   & 1.0 & 20425.0   & 1.0 &1.0 &&   1.0   &  54935.1   &  1.0 &  1.0&& 1.0 & 4432.0 & 1.0  & 1.0\\
& 0.1     & 1.0 & 20914.1 & 1.0  & 1.0  &&  1.0   &  57822.5 &   1.0 &  1.0&&1.0 & 5067.2  & 1.0  & 1.0\\
&  0.2     & 1.0 & 33753.2 & 1.0 &1.0  &&  1.0  &  72969.3 &   1.0&  1.0   &&1.0 & 7232.8   & 1.0  & 1.0 \\
&   0.3    & 1.0 &49212.2& 1.0 & 1.0 && 1.0  &  98714.7 &   1.0 &  1.0      && 1.0 & 12683.8 & 1.0  & 1.0\\
&   0.4    & 1.0 & 105847.2 & 1.0& 1.0 &&  1.0 & 222906.8   &   0.92 &  1.0 && 1.0 & 39647.0 & 1.0  & 1.0 \\
    \cmidrule{1-16}
    
\multirow{6}{*}{Auer-A1}
& 0.0     & 1.0  &  33779.0 & 1.0 & 1.0  &&  1.0 & 7175.7 &    1.0 &       1.0 && 1.0 & 642344.1  & 1.0  & 1.0\\
& 0.05    & 0.0 & 59092.6 & 0.0& 0.92   &&   0.97 & 250885.9   &    0.0 & 0.98 && 1.0 & 900844.7  & 1.0& 1.0\\
& 0.1     &  0.0& 59428.8 & 0.0 & 0.92  &&  0.92   & 174599.9  &  0.0 &   0.97 && 1.0 & 1089962.2  & 0.93  & 1.0\\
&  0.2    & 0.0 & 58917.8 & 0.0 & 0.92  &&   0.78 & 80767.7  &   0.0 &   0.90 &&0.8 & 1450304.8   & 0.71  & 0.99 \\
& 0.3     & 0.0 & 57953.3 & 0.0 & 0.92  &&   0.91 &  41946.3 &   0.0 &   0.96 && 0.0 & 1946291.2 & 0.71  & 0.75 \\
& 0.4     & 0.0 & 57765.6 & 0.0 & 0.92 &&   0.91 &  37811.9 &   0.0 &   0.97 && 0.0& 1696773.8 & 0.41  & 0.75 \\

\cmidrule{1-16}

\multirow{6}{*}{Auer-A1-M}
& 0.0  & 1.0  & 33320.7 & 1.0 & 1.0 && 1.0 & 9896.7 & 1.0 & 1.0 && 1.0 & 637868.3 & 1.0 & 1.0 \\
& 0.05 & 1.0  & 35075.4 & 1.0 & 1.0 && 0.99 & 19033.2 & 0.94 & 1.0 && 1.0 & 642238.1 & 1.0 & 1.0 \\
& 0.1  & 1.0  & 36231.0 & 1.0 & 1.0 && 0.96 & 31614.2 & 0.76 & 0.99 && 1.0 & 646958.6 & 1.0 & 1.0 \\
& 0.2  & 1.0  & 43351.1 & 0.98 & 1.0 && 0.74 & 47045.5 & 0.22 & 0.87 && 1.0 & 658259.9 & 1.0 & 1.0 \\
& 0.3  & 1.0  & 43722.6 & 0.95 & 1.0 && 0.50 & 23726.7 & 0.03 & 0.71 && 1.0 & 680222.4 & 1.0 & 1.0 \\
& 0.4  & 1.0  & 45058.0 & 0.83 & 1.0 && 0.20 & 6526.8 & 0.0  & 0.59 && 1.0 & 739778.4 & 1.0 & 1.0 \\
    \hline
    \end{tabular}}
\end{table*} 

\subsection{Experiments on UVA/PADOVA Diabetes Simulator}

Managing type 1 diabetes requires adjusting insulin doses based on carbohydrate intake to maintain blood glucose levels within a safe range. Because this method demands specialized training and knowledge, it is prone to errors among patients \citep{diabetes_management}. To simulate this problem, we conduct \textit{in silico} experiments using the University of Virginia (UVa)/PADOVA T1DM simulator \citep{simglucose_paper} which simulates the blood glucose levels of type 1 diabetes patients for a given meal event, i.e., carbohydrate content of the meal, and an administered external insulin dose. We select 36 combinations of bolus insulin doses and meal events as potential treatment scenarios for a selected patient and assign each of them to an arm. We aim at identifying the combinations that most effectively reduce the duration of time during which blood glucose levels fall below or exceed the safe range under occasional inaccurate carbohydrate counting. We define the safe range as 100-140 mg/dL. When an arm is pulled, the time above the safe range and below the safe range are observed with a small Gaussian noise with $\sigma=0.025$, which is also used in \eqref{eqn:R-subgaussian}. To simulate real-world cases where patients inaccurately count carbohydrates, resulting in the consumption of meals with incorrect carbohydrate content, we deploy an oblivious contamination that changes the meal event of an arm without changing the bolus insulin dose. We standardize the reward distribution so that it has a mean of zero and a variance of one. The choice for the parameters $\alpha, \delta$, and $\bar{t}$ are the same as the SW-LLVM experiment. We report the average results over 100 runs in Table \ref{tbl:all experiments}. The results indicate that the Auer-A1 exhibits significant limitations under adversarial conditions, in contrast to our algorithm, which aligns with our theoretical predictions and consistently meets the success criteria across various attack probabilities. \rev{Specifically, though it takes excessive amounts of samples, it fails to return an accurate and covering Pareto set. Auer-A1-M method manages to return such a Pareto set, but not without requiring almost 2 orders of magnitude more samples than R-PSI.}

\section{Conclusion} \label{sec:conclusion}

We investigated the Pareto set identification problem under adversarial attacks. We proposed a sample-efficient algorithm that returns a predicted set that abides by the accuracy and coverage requirements. We also proved a sample complexity upper bound that matches the adversary-free case lower bound proved in previous studies in terms of accuracy parameter dependence. We further proved a tighter gap-dependent sample complexity bound. The experimental results support our theoretical predictions, demonstrating robustness in adversarial settings. In contrast, multi-objective methods developed for adversary-free environments showed reduced effectiveness against strong adversaries. \rev{Even when these methods were extended with median estimators instead of mean estimators—which provided some improvement—they still struggled to maintain effectiveness against strong adversarial conditions.} To the best of our knowledge, this is the first study to propose an algorithm with theoretical guarantees that is capable of approximating the Pareto set when the reward samples are corrupted by adversarial attacks with arbitrary contamination values. An interesting future research direction is to investigate the success and the sample complexity of the Pareto set identification for large-scale MO-MAB problems with correlated arms. Employing skewed stochastic processes with separate mean and median parameters could increase the practicality of robust Pareto set identification methods for large-scale MO-MAB problems, as these processes reduce sample complexity by effectively capturing the correlations between the arms.

\bibliography{main}
\bibliographystyle{tmlr}

\appendix
\section{Appendix}

\subsection{Derivation of \eqref{eqn:R-subgaussian}}
\label{appendix:R-subgaussian}

Let any $X$ such that $X-\mathbb{E}[X]$ is $\sigma$-subgaussian distributed, be called a $\sigma$-subgaussian variable. Then, for any $q>0$, we have :
\begin{equation}\label{eqn:upper_dev}
    \mathbb{P} \big(X - \mathbb{E}[X] \geq q \big) \leq \; e^{{\frac{-q^2}{2\sigma^2}}} ~,
\end{equation}
\begin{equation}\label{eqn:lower_dev}
    \mathbb{P} \big(X -\mathbb{E}[X] \leq -q \big) \leq \; e^{{\frac{-q^2}{2\sigma^2}}} ~.
\end{equation}
By definition:
\begin{equation} \label{eqn:quantile_def_bound1}
    \mathbb{P}(X < Q_{R,F}(1/2+t)) \leq 1/2+t, \; \forall t\in \left(0,\frac{1}{2}\right) ~.
\end{equation}
Next, we will bound $Q_{R,F}(1/2 +t) -m$ for the three cases given below. Let $q_t := Q_{R,F}(1/2 + t)$.  {\em Case 1:} $\mathbb{E}[X] \leq m \leq q_t$.  {\em Case 2:}  $m \leq \mathbb{E}[X] \leq q_t$. {\em Case 3:} $m \leq q_t \leq \mathbb{E}[X]$.

{\em Case 1:} Since $q_t-\mathbb{E}[X]>0$, then, by \eqref{eqn:upper_dev} and \eqref{eqn:quantile_def_bound1}:
\begin{align*}
1- (1/2 + t ) \leq \mathbb{P}\big(X \geq q_t \big) & = \mathbb{P}\big(X- \mathbb{E}[X] \geq q_t - \mathbb{E}[X]\big)  \\&\leq e^{{\frac{-(q_t-\mathbb{E}[X])^2}{2\sigma^2}}} ~.
\end{align*}

Simplifying above gives: 
\begin{equation}\label{eqn:quantile_bound1}
    q_t \leq \mathbb{E}[X] + \sqrt{2\sigma^2\log \left( \frac{1}{1/2-t} \right)} ~.
\end{equation}

Since $m  \geq \mathbb{E}[X]$:
\begin{equation*}
q_t - m \leq q_t - E[X] \leq \sqrt{2\sigma^2\log \left (\frac{1}{1/2-t}\right)} ~.
\end{equation*}

{\em Case 2:} In this case, \eqref{eqn:quantile_bound1} is valid since $q_t \geq \mathbb{E}[X]$. Also, by \eqref{eqn:lower_dev}, we can bound $\mathbb{E}[X] - m$:
\begin{equation*}
    1/2 \leq \mathbb{P}(X \leq m) = \mathbb{P}(X - \mathbb{E}[X]  \leq m - \mathbb{E}[X]) \leq e^{{\frac{-(\mathbb{E}[X] - m)^2}{2\sigma^2}}}~.
\end{equation*}

Therefore, we have:
\begin{equation}\label{eqn:quantile_bound2}
    \mathbb{E}[X]-m \leq \sqrt{2\sigma^2 \log 2} ~.
\end{equation}
Combining \eqref{eqn:quantile_bound2} with \eqref{eqn:quantile_bound1}, we obtain:
\begin{align*}
q_t - m &= (q_t- \mathbb{E}[X]) + (\mathbb{E}[X] - m)\leq \sqrt{2\sigma^2\log \left(\frac{1}{1/2-t} \right)} + \sqrt{2\sigma^2 \log 2} ~.
\end{align*}

{\em Case 3:} By \eqref{eqn:quantile_bound2} and by the fact that $q_t \leq \mathbb{E}(X)$:
\begin{equation*}
q_t - m \leq \mathbb{E}[X]-m \leq \sqrt{2\sigma^2 \log 2} ~. 
\end{equation*}

Combining the results for all three cases, we obtain:
\begin{equation*}
q_t - m \leq \sqrt{2\sigma^2\log \left( \frac{1}{1/2-t} \right)} + \sqrt{2\sigma^2 \log 2} ~.
\end{equation*}

Following a similar argument, one can obtain the same bound on $m-q_t$, from which the result follows.

\subsection{Proof of Lemma \ref{rem:best_accuracy}}
\label{appendix:best_accuracy}

Consider a simple environment with only 2 arms. Suppose that arm $1$ is Pareto optimal and arm $2$ is such that
\begin{equation*}
 \forall d \in [M]: m_1^d- m_2^d = 2D - \zeta ~.
\end{equation*}
Adversary can manipulate the experiment so that $\forall d \in [M]\;$, $\lim_{n \to \infty}  \hat{m}(\tilde{Y}_{1,1}^d, \cdots , \tilde{Y}_{1,n}^d)  = m^d_1 - D$ and $\lim_{n \to \infty}  \hat{m}( \tilde{Y}_{2,1}^d, \cdots , \tilde{Y}_{2,n}^d)  = m^d_2 + D~.$ This implies that: $\exists N_0:\; \forall N > N_0,\; \forall d \in [M],\; m_1^d - D - \zeta/4 \leq \hat{m}( \tilde{Y}_{1,1}^d, \cdots , \tilde{Y}_{1,N}^d)  \leq m_1^d -D + \zeta/4~$ and $m_2^d + D - \zeta/4 \leq \hat{m}(\tilde{Y}_{2,1}^d, \cdots ,\tilde{Y}_{2,N}^d)  \leq m_2^d + D +  \zeta/4~.$ Therefore, $\forall N > N_0$: 
\begin{equation*}
    \hat{m}(\tilde{Y}_{2,1}^d, \cdots ,\tilde{Y}_{2,N}^d) - \hat{m}(\tilde{Y}_{1,1}^d, \cdots ,\tilde{Y}_{1,N}^d) \geq \zeta/2 > 0~.
\end{equation*}

Hence, we conclude that, even if infinitely many samples are collected from both arms, it is not possible to decide on their optimality based on the empirical medians.

\subsection{Proof of Lemma \ref{lem:good_event}}
\label{appendix:good_event}

 The total number of samples $N_{ \tau_i }$ taken from arm $i$ at the end of the sampling phase $\tau_i$ is given by:
 \begin{align*}
     N_{\tau_i} &= \sum_{\tau = 1}^{\tau_i} n_{\tau} \\
     &= \lceil{2\beta_{\bar{t}, \epsilon}\log(\frac{\pi^2MK}{6\tilde{\delta}})}\rceil + 
     \sum_{\tau =2}^{\tau_i} \left( 1 + \lceil 4\tau\beta_{\bar{t}, \epsilon}\log(\frac{\tau}{\tau-1}) + 2\beta_{\bar{t}, \epsilon} \log \frac{(\tau-1)^2MK\pi^2}{6\tilde{\delta}} \rceil \right) \\
    &\geq 2\beta_{\bar{t}, \epsilon}\log(\frac{\pi^2MK}{6\tilde{\delta}}) + 
     \sum_{\tau =2}^{\tau_i} \left( 4\tau\beta_{\bar{t}, \epsilon}\log(\frac{\tau}{\tau-1}) + 2\beta_{\bar{t}, \epsilon} \log \frac{(\tau-1)^2MK\pi^2}{6\tilde{\delta}} \right)
 \end{align*}
After simplifying the r.h.s. of the above display, we obtain: 
\begin{align*}
    \hspace{0mm}N_{\tau_i} &\geq 2\tau_i\beta_{\bar{t}, \epsilon} \log \left(\frac{\tau_i^2MK\pi^2}{6\tilde{\delta}} \right)
    = 2\tau_i\beta_{\bar{t}, \epsilon} \log \left(\frac{\delta}{\tilde{\delta}\delta_{\tau_i}} \right) 
    = 2\tau_i\beta_{\bar{t}, \epsilon} \log \left( \frac{1}{\tilde{\delta}_{\tau_i}} \right) ~,
\end{align*}
where $\delta_{\tau_i} = \frac{6 \delta}{\tau_i^2MK\pi^2}$, $\tilde{\delta}_{\tau_i}= (\delta_{\tau_i}\tilde{\delta})/\delta$ and $\tilde{\delta}$ is given in equations \ref{eqn:def_delt_oblv} and \ref{eqn:def_delt_mal}.

Since $N_{\tau_i} \geq 2\tau_i\beta_{\bar{t}, \epsilon} \log(\frac{1}{\tilde{\delta}_{\tau_i}}) $ and $R$ is a non-decreasing function, we have
\begin{align*}
 R \left( h_{\epsilon} + \sqrt{\frac{1}{\beta_{\bar{t}, \epsilon}\tau_i}} \right) \geq R \left (h_{\epsilon} + \sqrt{\frac{2\log(1/\tilde{\delta}_{\tau_i})}{N_{ \tau_i}}} \right) ~.
\end{align*}

Next, note that $\tilde{\delta}_{\tau_i}= \delta_{\tau_i}/2$ for oblivious and prescient adversary and $\tilde{\delta}_{\tau_i}= \delta_{\tau_i}/3$ for malicious adversary. The inequality above, and Lemmas \ref{lem:emp_bound1} and \ref{lem:emp_bound2} imply that $\forall i \in [K]$ and $\forall d\in [M]$:
\begin{align*}
    &\mathbb{P} \bigg(\sup_{m_i^d} |\hat{m}_i^d - m_i^d| \geq  R \left(h_{\epsilon} + \sqrt{\frac{1}{\beta_{\bar{t}, \epsilon}\tau_i}} \right) \bigg) \leq 
     \mathbb{P} \bigg(\sup_{m_i^d} |\hat{m}_i^d - m_i^d| \geq R \left(h_{\epsilon} + \sqrt{\frac{2\log(1/\tilde{\delta}_{\tau_i})}{N_{\tau_i}}}\right) \bigg) \leq \delta_{\tau_i} ~.
\end{align*}

Inserting $U_{\tau_i}$ and $D$ in the left hand side of the inequality above we obtain:
\begin{align*}
\mathbb{P} \bigg(\sup_{m_i^d} |\hat{m}_i^d - m_i^d| \geq  U_{\tau_i} + D \bigg)  \leq \delta_{\tau_i} ~.
\end{align*}

The result follows by the union bound:
\begin{align*}
       \mathbb{P}(E) &\geq 1- \sum_{i \in [K]} \sum_{j \in [M]} \sum_{\tau_i \geq 1} \delta_{\tau_i} = 1- \sum_{i \in [K]} \sum_{j \in [M]} \sum_{\tau_i \geq 1} \frac{6\delta}{\tau_i^2MK\pi^2} \\
        & = 1- MK\frac{6}{MK\pi^2}\delta \sum_{\tau_i\geq 1}\frac{1}{\tau_i^2} = 1- \delta ~.
\end{align*}

\subsection{Proof of Theorem \ref{thm: sample complexity}}
\label{appendix:theorem_sample_complexity}

In the proof, we assume that event $E$ holds. First, we prove that the Pareto optimal arms are not eliminated in the `Elimination' step.

\begin{lem}\label{lem:elimination guarantee}
R-PSI does not eliminate Pareto optimal arms in the `Elimination' step.
\begin{proof}
At elimination step, an arm $i$ is eliminated if $\exists j \in S \setminus \{i\}: \hat{m}_i + D+ U_i \prec \hat{m}_j  - D - U_j$. By Lemma \ref{lem:good_event}, this implies that $m_i = (m_i - D - U_i) + D + U_i \preceq  \hat{m}_i + D + U_i \prec \hat{m}_j-D -U_j \preceq (m_j + D + U_j) - D - U_j$. Hence, $m_i \prec m_j$. By definition of Pareto optimality, this is not possible if $i$ is a Pareto optimal arm. Hence, Pareto optimal arms cannot be discarded at the elimination step of R-PSI.
\end{proof}
\end{lem}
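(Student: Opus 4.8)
The plan is to argue by contradiction, using the good event $E$ from Lemma \ref{lem:good_event} to translate the empirical elimination criterion into a statement about the true median vectors. Suppose, for contradiction, that some Pareto optimal arm $i$ is removed during the Elimination step. By the elimination rule in Algorithm 1, this means there exists $j \in S \setminus \{i\}$ with $\hat{m}_i + D + U_i \preceq \hat{m}_j - D - U_j$. My goal is to show that this forces $m_i \preceq m_j$ with $j \neq i$, which is ruled out for $i \in P^{*}$ by Definition \ref{defn:Paretoset}.

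First I would extract the two one-sided consequences of $E$ that I actually need. For every coordinate $d$, Lemma \ref{lem:good_event} gives $|\hat{m}_i^d - m_i^d| \leq D + U_i$, which I rewrite as the pair of coordinatewise vector inequalities $m_i \preceq \hat{m}_i + D + U_i$ (using the lower confidence bound on $\hat{m}_i$) and $\hat{m}_j - D - U_j \preceq m_j$ (using the upper confidence bound on $\hat{m}_j$). The key observation is that the additive buffers $D + U_i$ and $D + U_j$ appearing in the elimination rule are exactly the half-widths of the confidence regions guaranteed by $E$, so they are designed to cancel against the buffers in the elimination condition.

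Chaining these inequalities with the elimination hypothesis then yields
\[
m_i \preceq \hat{m}_i + D + U_i \preceq \hat{m}_j - D - U_j \preceq m_j,
\]
so by transitivity of $\preceq$ we obtain $m_i \preceq m_j$. Since $j \neq i$, this directly contradicts the definition of Pareto optimality in Definition \ref{defn:Paretoset}, which requires that no other arm weakly dominate the median vector of a Pareto optimal arm. Hence no Pareto optimal arm can ever satisfy the elimination condition, which is the claim.

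I expect the only delicate point to be the bookkeeping around the partial order: one must verify that each step of the chain holds \emph{coordinatewise} (since $\preceq$ is the componentwise order on $M$-vectors) and that the buffers cancel in the correct direction rather than accumulating. Because $E$ is assumed to hold simultaneously for all arms, objectives, and sampling rounds, both the lower bound on the $i$-side and the upper bound on the $j$-side are available at once, so there is no further concentration or union-bound work to do here — that effort is entirely absorbed into $E$. The main conceptual obstacle is simply recognizing that the symmetric placement of $D + U$ terms in the elimination rule is precisely what makes the confidence buffers cancel, leaving a clean domination relation on the true medians.
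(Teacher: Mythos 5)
Your proof is correct and takes essentially the same route as the paper: both use the good event $E$ to chain $m_i \preceq \hat{m}_i + D + U_i \preceq \hat{m}_j - D - U_j \preceq m_j$ and then invoke the definition of Pareto optimality to rule this out for $i \in P^{*}$. The only cosmetic difference is that you frame it as an explicit contradiction while the paper states it directly; the mathematical content is identical.
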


Next, we prove that R-PSI has an optimality guarantee of $(2D + \alpha)$, i.e., any arm in $P$ is $(2D + \alpha)$-optimal. Before proving this, we state a technical result that will be used in the proof. 

\begin{lem} \label{lem:O2 condition}
Suppose an arm $i$ is moved to $O_2$ at some sampling phase $t_1$. Then, the following is satisfied for all $t \geq t_1$:
\begin{align*}
\forall j \in S, \; \exists d_j \in [M] : m_j^{d_j} +D+ \alpha > m_i^{d_j}  - D ~.
\end{align*}
\end{lem}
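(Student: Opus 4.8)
The plan is to read off the inequality defining membership in $O_2$ at round $t_1$, convert it from empirical to true medians using the good event $E$, and then exploit that the undecided set $S$ can only shrink across rounds.

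First I would unpack what $i \in O_2$ at round $t_1$ means. By the Identification step of R-PSI, this is precisely the statement that no $j$ in the round-$t_1$ undecided set $S \setminus \{i\}$ satisfies $\hat m_j - U_j + \alpha \preceq \hat m_i + U_i$. Negating $\preceq$ coordinate-wise (recall $x \npreceq y$ means $\exists d:\, x^d > y^d$), this says that for every such $j$ there is a coordinate $d_j \in [M]$ with
\[ \hat m_j^{d_j} - U_j + \alpha > \hat m_i^{d_j} + U_i . \]

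Next I would apply the good event of Lemma~\ref{lem:good_event} at the round-$t_1$ sampling counts, which gives $\hat m_j^{d_j} \le m_j^{d_j} + D + U_j$ and $\hat m_i^{d_j} \ge m_i^{d_j} - D - U_i$. Substituting the upper bound into the left side and the lower bound into the right side makes the $U_j$ and $U_i$ terms cancel, yielding
\[ m_j^{d_j} + D + \alpha \ge \hat m_j^{d_j} - U_j + \alpha > \hat m_i^{d_j} + U_i \ge m_i^{d_j} - D , \]
so $m_j^{d_j} + D + \alpha > m_i^{d_j} - D$ for every $j$ in the round-$t_1$ set $S \setminus \{i\}$, with the witnessing coordinate $d_j$ fixed at round $t_1$.

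Finally I would extend this from round $t_1$ to all $t \ge t_1$. The only conceptual step is to observe that $S$ is monotonically non-increasing across rounds: arms leave $S$ only through Elimination or by being moved into $O_2$, and no arm is ever re-inserted once removed (in particular $i$ itself is removed at round $t_1$). Hence the current undecided set at any $t \ge t_1$ is contained in the round-$t_1$ set $S \setminus \{i\}$. Since the true medians $m_j$ and $m_i$ do not depend on $t$, the coordinate $d_j$ obtained at round $t_1$ keeps witnessing $m_j^{d_j} + D + \alpha > m_i^{d_j} - D$ for every $j$ still present in $S$ at the later round, which is exactly the claim. The main thing to get right is this monotonicity of $S$ together with the time-invariance of the true medians; the remainder is just the mechanical negation of $\preceq$ and a single application of the good event.
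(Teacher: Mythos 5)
Your proposal is correct and follows essentially the same route as the paper's proof: read off the $O_2$ membership condition, negate $\preceq$ coordinate-wise, apply the good event $E$ so that the $U_i$ and $U_j$ terms cancel, and conclude for later rounds via the fact that $S$ never admits new arms. Your explicit remarks on the monotonicity of $S$ and the time-invariance of the true medians merely spell out what the paper compresses into its final sentence.
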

\begin{proof}
Since $i$ is moved to $O_2$ at sampling phase $t_1$, the condition for $O_2$ requires:
\begin{align*}
\nexists j \in S \setminus \{i\}: \hat{m}_j  - U_j + \alpha \preceq \hat{m}_i  + U_i ~.
\end{align*}
This implies:
\begin{align*}
\forall j \in S  \setminus \{i\}, \; \exists d_j \in [M] : \hat{m}_j^{d_j}  - U_j + \alpha > \hat{m}_i^{d_j} + U_i ~.
\end{align*}
Applying Lemma \ref{lem:good_event} gives:
\begin{align*}
     &\forall j \in S \setminus \{i\}, \; \exists d_j \in [M] : \; (m_j^{d_j} + D+ U_j)  - U_j + \alpha \geq 
   \hat{m}_j^{d_j} - U_j + \alpha >\hat{m}_i^{d_j}  + U_i \geq (m_i^{d_j} - D - U_i)  + U_i ~.
\end{align*}
Hence:
\begin{align*}
\forall j \in S, \; \exists d_j \in [M] :  m_j^{d_j} + D + \alpha > m_i^{d_j} -D ~.
\end{align*}
The result follows by noting that $S$ does not admit any new arms over time. 
\end{proof}

We are now ready to prove the optimality guarantee for the predicted arms. 

\begin{lem}\label{lem:suboptimality guarantee}
$P$ can only contain arms that are $(2D+ \alpha)$-optimal. 
\end{lem}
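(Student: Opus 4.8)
The plan is to reduce the claim to the coordinatewise statement that for every $i \in P$ and every Pareto optimal $j \in P^{*}$ one has $\min_d (m_j^d - m_i^d) < 2D + \alpha$; taking the maximum over $j \in P^{*}$ in Definition \ref{def:suboptimality} then yields $\Delta_i \le 2D + \alpha$ (the case $j=i$ is trivial since $\Delta_{i,i}=0$). The engine of the argument is a single translation step: whenever an index belongs to $O_1$ at some round, its $O_1$-defining inequality must fail against every competitor in the current $S$, and feeding the good-event bounds of Lemma \ref{lem:good_event} into that failed inequality converts it into a bound on the \emph{true} medians. Concretely, if $i \in O_1$ and $j \in S \setminus \{i\}$, then there is a coordinate $d$ with $\hat m_i^d - U_i + \alpha > \hat m_j^d + U_j$, and using $\hat m_i^d \le m_i^d + D + U_i$ together with $\hat m_j^d \ge m_j^d - D - U_j$ collapses this to $m_j^d - m_i^d < 2D + \alpha$. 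The same substitution applied to the $O_2$-defining inequality (exactly as in Lemma \ref{lem:O2 condition}) gives the identical conclusion.

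First I would observe that every arm ever placed in $P$ is a member of $O_1$ at the round $t_i$ at which it is finalized: arms enter $P$ either through $O_2 \subseteq O_1$ in the ``if'' branch or through $O_1$ itself in the terminating ``else'' branch. Fix such an $i$ and a target $j \in P^{*}$ with $j \ne i$, and split on whether $j$ still lies in the (post-elimination) undecided set $S$ at round $t_i$. If $j \in S$ at round $t_i$, then $j \in S \setminus \{i\}$, and the translation above applied to $i$'s $O_1$ membership immediately yields $m_j^d - m_i^d < 2D + \alpha$ in some coordinate $d$, hence $\Delta_{i,j} < 2D + \alpha$.

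The interesting case is $j \notin S$ at round $t_i$, and this is where the bookkeeping matters. Since $j$ is Pareto optimal, Lemma \ref{lem:elimination guarantee} forbids its removal in the Elimination step, so the only way it could have left $S$ is by being moved into $O_2$ at some strictly earlier round $t_j < t_i$. Because $i$ is itself never eliminated (it ends up in $P$) and is not finalized until $t_i$, arm $i$ is still present in $S$ at round $t_j$, i.e., $i \in S \setminus \{j\}$ there. I would then invoke $j$'s $O_2$ membership at round $t_j$ against the competitor $i$: the failed $O_2$ inequality yields a coordinate $d$ with $\hat m_i^d - U_i + \alpha > \hat m_j^d + U_j$, and the same good-event substitution again gives $m_j^d - m_i^d < 2D + \alpha$. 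Either way $\Delta_{i,j} < 2D + \alpha$, and maximizing over $j \in P^{*}$ closes the proof.

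I expect the main obstacle to be the ordering/scheduling argument in the second case rather than the inequality manipulation: one must argue cleanly that a vanished Pareto optimal arm must have been \emph{finalized earlier} (it cannot have been eliminated, by Lemma \ref{lem:elimination guarantee}), and that $i$ was necessarily still undecided at that earlier round, so that the relevant $O_2$ condition is actually available with $i$ as the tested competitor. A secondary point to handle carefully is that the set $S$ appearing in the $O_1$ and $O_2$ conditions is the post-elimination set, and that the good-event inequalities are applied in the correct direction (upper bound on $\hat m_i^d$, lower bound on $\hat m_j^d$) so that the $U$-terms cancel exactly and leave the clean $2D + \alpha$ slack.
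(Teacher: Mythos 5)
Your proposal is correct and follows essentially the same route as the paper's own proof: both reduce the claim to coordinatewise bounds on true medians via Lemma \ref{lem:good_event}, both use Lemma \ref{lem:elimination guarantee} to conclude that every Pareto optimal arm is either still in $S$ (where the finalized arm's $O_1$ condition applies) or already finalized through $O_2$ (where Lemma \ref{lem:O2 condition} applies), and both combine the two cases before maximizing over $P^{*}$. The only difference is presentational --- you fix one target $j \in P^{*}$ and case-split on its location, while the paper partitions $P^{*}$ into $S^{(p)} \cup P^{(p)}$ at the finalization round --- which is the same argument.
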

\begin{proof}

Denote the Pareto optimal arms in $S$ at the beginning of sampling phase $t_2$ by $S^{(p)}$ and the Pareto optimal arms in $P$ at the beginning of sampling phase $t_2$ by $P^{(p)}$. We have $P^* = S^{(p)} \cup P^{(p)}$.

Consider $j \in S$ that is moved to $P$ at sampling phase $t_2$. Note that an arm in $S$ needs to be first moved to $O_1$ to end up in $P$. Assume $S^{(p)} \neq \emptyset$ or $S^{(p)} \neq \{ j \}$. Then, $j \in S$ moved to $O_1$ implies that:
\begin{align*}
\nexists i \in S^{(p)} \setminus \{j\}  : \hat{m}_j - U_j + \alpha \preceq \hat{m}_i  + U_i ~,
\end{align*}
or equivalently:
\begin{align*}
\forall i \in S^{(p)}  \setminus \{j\} , \; \exists d_i \in [M]: \hat{m}_j^{d_i}  - U_j + \alpha > \hat{m}_i^{d_i} + U_i ~.
\end{align*}
By Lemma \ref{lem:good_event}:
\begin{align*}
    &\forall i \in S^{(p)} \setminus \{j\} , \; \exists d_i \in [M]: \;  (m_j^{d_i} + D + U_j)  - U_j+ \alpha \geq 
     \hat{m}_j^{d_i} - U_j + \alpha > \hat{m}_i^{d_i} + U_i \geq  (m_i^{d_i} - D  - U_i)  + U_i ~.
\end{align*}
Hence:
\begin{align} \label{eq:1} 
    \forall i \in S^{(p)}  \setminus \{j\} , \; \exists d_i \in [M]: m_j^{d_i} + D + \alpha > m_i^{d_i} -D ~.
\end{align}
By Definition \ref{def:suboptimality}, this implies that $\Delta_{j,i} < 2D + \alpha$ for all $i \in S^{(p)}$.

Assume that $P^{(p)} \neq \emptyset$. We know each arm $i \in P^{(p)}$ must have visited $O_2$ in some sampling phase $t < t_2$ since algorithm did not terminate for any $t < t_2$. Since $j$ is in $S$ at sampling phase $t$, Lemma \ref{lem:O2 condition} implies that $\exists d_i \in [M] : m_j^{d_i} + D + \alpha > m_i^{d_i} - D$. Noting that this holds for all $i \in P^{(p)}$, we obtain
\begin{align} \label{eq:com1}
\forall i \in P^{(p)}, \; \exists d_i \in [M] : m_j^{d_i} + D + \alpha > m_i^{d_i} - D ~.
\end{align}
By Definition \ref{def:suboptimality}, this implies that $\Delta_{j,i} < 2D + \alpha$ for all $i \in P^{(p)}$. This together with the result after \eqref{eq:1} implies that $\Delta_j < 2D + \alpha$. Hence, we conclude that if arm $j$ is moved to $P$, it needs to be ($2D + \alpha$)-optimal.

\end{proof}

Next, we show that for any Pareto optimal arm that is not returned in $P$, there exists an arm returned in $P$ which is not worse than the Pareto optimal arm more than $2D$ in any objective. 
\begin{lem} \label{lem:coverage}
If a Pareto optimal arm $p^{*}$ is not returned in $P$ when R-PSI terminates, then there exists an arm $j$ returned in $P$ such that $m_{p^{*}} - m_j \preceq 2D$~.
\end{lem}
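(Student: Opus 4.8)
The plan is to track how a Pareto optimal arm can fail to be returned and then exhibit an explicit covering arm via a domination chain. First I would dispose of the termination mode where the \texttt{while} loop exits with $S=\emptyset$: by Lemma~\ref{lem:elimination guarantee} Pareto optimal arms are never removed in the \emph{Elimination} step, so every Pareto optimal arm that leaves $S$ does so by being moved to $P$; hence if $S=\emptyset$ at termination all of $P^{*}\subseteq P$ and the claim is vacuous. Therefore the only relevant case is termination through the \emph{else} branch, where the guard fails, i.e.\ $U_k\le\alpha/4$ for all $k\in S$, and the algorithm returns $P\cup O_1$. Since $p^{*}$ is Pareto optimal, not eliminated, and not in $P$, it must sit in $S$ at this moment, and because every arm of $O_1$ is appended to $P$ in this branch, we must have $p^{*}\notin O_1$. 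By the definition of $O_1$ this yields a witness $j_1\in S\setminus\{p^{*}\}$ with $\hat{m}_{p^{*}}-U_{p^{*}}+\alpha\preceq\hat{m}_{j_1}+U_{j_1}$.

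Next I would build a blocking chain $p^{*}=j_0,j_1,j_2,\ldots$ inside $S$, where each $j_{k+1}$ is a witness to $j_k\notin O_1$, stopping the first time we reach some $j_n\in O_1$. The key observation is that each blocking relation $\hat{m}_{j_k}-U_{j_k}+\alpha\preceq\hat{m}_{j_{k+1}}+U_{j_{k+1}}$ forces, coordinatewise,
\begin{equation*}
\hat{m}_{j_k}^{d}\le \hat{m}_{j_{k+1}}^{d}+U_{j_k}+U_{j_{k+1}}-\alpha\le \hat{m}_{j_{k+1}}^{d}-\alpha/2 ,
\end{equation*}
using $U_{j_k},U_{j_{k+1}}\le\alpha/4$ in the \emph{else} branch. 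Thus the potential $\sum_d\hat{m}_{j_k}^{d}$ strictly increases by at least $M\alpha/2$ at every step, so no arm can recur; since $S$ is finite the chain must terminate, and it can only terminate at an arm $j_n$ with no witness, i.e.\ $j_n\in O_1\subseteq P$. This gives a concrete candidate cover $j_n$ that is genuinely returned.

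It then remains to verify the covering inequality for $j_n$. Telescoping the per-step bounds along the chain gives $\hat{m}_{p^{*}}^{d}+n\,\alpha/2\le\hat{m}_{j_n}^{d}$ for every $d$. Invoking the good event $E$ (Lemma~\ref{lem:good_event}) to pass from empirical to true medians, $m_{p^{*}}^{d}-D-U_{p^{*}}\le\hat{m}_{p^{*}}^{d}$ and $\hat{m}_{j_n}^{d}\le m_{j_n}^{d}+D+U_{j_n}$, I would combine these to obtain
\begin{equation*}
m_{p^{*}}^{d}-m_{j_n}^{d}\le 2D+U_{p^{*}}+U_{j_n}-n\,\alpha/2 .
\end{equation*}
Because $n\ge1$ and $U_{p^{*}},U_{j_n}\le\alpha/4$, the trailing terms satisfy $U_{p^{*}}+U_{j_n}-n\,\alpha/2\le\alpha/2-\alpha/2=0$, so $m_{p^{*}}^{d}-m_{j_n}^{d}\le 2D$ for all $d$, i.e.\ $m_{p^{*}}-m_{j_n}\preceq 2D$ with $j_n\in P$, as required.

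The delicate steps I expect to be the crux are the termination (acyclicity) of the blocking chain and the exact closing of the bias budget: both hinge on the \emph{else}-branch condition $U_k\le\alpha/4$, which is exactly what makes each blocking step strictly increase the potential and simultaneously lets the accumulated $U$ terms be absorbed by $n\,\alpha/2$ down to precisely $2D$. I would also be careful to note that all chain members stay in $S$ throughout (witnesses for $O_1$ are drawn from $S$, and $S$ admits no new arms), so that the final $j_n$ is indeed among the arms returned in $P$.
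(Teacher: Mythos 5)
Your proof is correct, and its skeleton matches the paper's: both arguments work at the terminal (\emph{else}-branch) round where $U_k \le \alpha/4$ for all $k \in S$, trace a chain of blocking witnesses inside $S$ starting from $p^{*}$, show the chain must end at an arm of $O_1$ (hence returned in $P$), and then invoke the good event of Lemma \ref{lem:good_event} to convert the empirical relation into the $2D$-coverage bound. Where you genuinely depart is in the two technical pillars. For acyclicity of the chain, the paper runs a double induction on non-domination relations (showing $l_k \not\preceqdot z$ for every earlier chain member $z$), whereas you use the scalar potential $\sum_d \hat{m}^d$, which each blocking step increases by at least $M\alpha/2$; this is more elementary and makes finiteness/termination immediate. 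For the covering bound, the paper proves by a second induction that the relation $p^{*} \preceqdot l_k$ propagates to the last chain element $j$ and only then applies the good event, while you telescope the coordinatewise inequalities to get the quantitatively sharper intermediate fact $\hat{m}_{p^{*}}^d + n\alpha/2 \le \hat{m}_{j_n}^d$ and close the bias budget using $n \ge 1$ together with $U_{p^{*}}, U_{j_n} \le \alpha/4$. Both executions hinge on exactly the same structural facts (the $O_1$ witness definition and the $\alpha/4$ guard), and both implicitly require $\alpha > 0$ as assumed in Theorem \ref{thm: sample complexity}, so neither is more general; but your potential-plus-telescoping formulation replaces the paper's two nested inductions with shorter, self-contained arithmetic, and in fact recovers the paper's intermediate relations as a by-product.
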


\begin{proof}
Lemma \ref{lem:elimination guarantee} guarantees that $p^*$ is not eliminated in the `Elimination' step of the algorithm given in line 17 of Algorithm \ref{pseudocode}. Therefore, $p^{*} \notin P$ can happen only if Algorithm \ref{pseudocode} enters the `else' statement in line 28 of its `Identification' step (happens when $\forall i \in S,\; U_i \leq \alpha/4$) and $p^* \notin O_1$ when this happens. The algorithm terminates and returns $P$ after this. 

$p^{*} \notin O_1$ in the final sampling phase before termination implies that there exists $l_1 \in S \setminus \{p\}$ such that: 
\begin{align*}
    \hat{m}_{p^{*}} - U_{\tau_{p^{*}}} + \alpha \preceq \hat{m}_{l_1} + U_{\tau_{l_1}}  \numberthis \label{eqn:temp lem8} ~.
\end{align*}
Let $p^{*} \preceqdot{} l_1$ denote the above relation. Also, considering that $U_{\tau_{l_1}},U_{\tau_{p^{*}}} \leq \alpha/4$, the above inequality implies: 
\begin{align*}
    \hat{m}_{p^{*}} + U_{\tau_{p^{*}}} \prec \hat{m}_{l_1} -U_{\tau_{l_1}} + \alpha ~.
\end{align*}

Hence:
\begin{align*}
    \hat{m}_{{l_1}} - U_{\tau_{l_1}} + \alpha \not \preceq \hat{m}_{p^{*}} + U_{\tau_{p^{*}}} ~.\numberthis \label{eqn:lem8 npreceq}
\end{align*}
We use the notation ${l_1} \not \preceqdot p^{*}~$ to denote the expression above. 

\textit{Case 1:} Assume that $l_1 \in O_1$, thus it will be returned in $P$. Applying Lemma \ref{lem:good_event}, we get: 
\begin{align*}
    m_{p^{*}} - D - 2U_{\tau_{p^{*}}} + \alpha &\preceq  \hat{m}_{p^{*}} - U_{\tau_{p^{*}}} + \alpha 
    \preceq \hat{m}_{l_1} + U_{\tau_{l_1}} \preceq m_{l_1} + D + 2U_{\tau_{l_1}}~,
\end{align*}
which implies that $m_{p^{*}} - m_{l_1} \preceq 2D + 2U_{\tau_{p^{*}}} + 2U_{\tau_{l_1}} - \alpha \preceq 2D$ since $\alpha \geq 2U_{\tau_{p^{*}}} + 2U_{\tau_{l_1}}$.

\textit{Case 2:} Assume that $l_1 \notin O_1$. In this case, there exists $l_2 \in S \setminus \{l_1\}$ such that:  
\begin{align}
    \hat{m}_{l_1} - U_{\tau_{l_1}} + \alpha \preceq \hat{m}_{l_2} + U_{\tau_{l_2}} ~. \label{eqn:newlem8_1}
\end{align}
Observe that $l_2 \neq p^*$. To see this assume that $l_2 = p^*$. This will imply
\begin{align*}
    \hat{m}_{l_1} - U_{\tau_{l_1}} + \alpha \preceq \hat{m}_{p^*} + U_{\tau_{p^*}} ~.
\end{align*}
This is a contradiction since \eqref{eqn:temp lem8} implies that 
\begin{align*}
    \hat{m}_{p^{*}} - U_{\tau_{p^{*}}} -  2 U_{\tau_{l_1}} + 2\alpha \preceq \hat{m}_{l_1} - U_{\tau_{l_1}} + \alpha \preceq \hat{m}_{p^*} + U_{\tau_{p^*}} \Rightarrow 2\alpha \preceq 2 U_{\tau_{l_1}} + 2 U_{\tau_{p^*}} ~,
\end{align*}
which does not hold. Thus, we conclude that $l_2 \neq p^*$. Chaining \eqref{eqn:newlem8_1} with \eqref{eqn:temp lem8}, we obtain
\begin{align*}
\hat{m}_{p^{*}} - U_{\tau_{p^{*}}} + \alpha  \preceq \hat{m}_{p^{*}} - U_{\tau_{p^{*}}} + \alpha + ( \alpha -  2 U_{\tau_{l_1}} ) \preceq \hat{m}_{l_1} - U_{\tau_{l_1}} + \alpha \preceq \hat{m}_{l_2} + U_{\tau_{l_2}} ~,
\end{align*}
which implies that $m_{p^{*}} - m_{l_2} \preceq 2D + 2U_{\tau_{p^{*}}} + 2U_{\tau_{l_2}} - \alpha \preceq 2D$ since $\alpha \geq 2U_{\tau_{p^{*}}} + 2U_{\tau_{l_2}}$. If $l_2 \in O_1$, we are done. If not, we prove by induction that there is some $j \in O_1$ for which $m_{p^{*}} - m_{j} \preceq 2D$.

Now, let $[l_i]_{i=1}^{n}$ denote a set of arms in $S$ that satisfies the following:  
\begin{enumerate}
    \item $p^{*} \preceqdot l_1 \preceqdot \cdots \preceqdot l_n~.$
    \item There are no repeating arms in the set. 
\end{enumerate}
Now, we will prove by induction that
\begin{align*}
    \forall z \in \{p^{*}\} \cup [l_i]_{i=1}^{k-1}, \; l_k \not \preceqdot z \numberthis \label{eqn:lemma 8 induction}
\end{align*}
for any $k \in [n]~.$ We have already proven in \eqref{eqn:lem8 npreceq} that when $k=1$, \eqref{eqn:lemma 8 induction} holds. Now, assuming that \eqref{eqn:lemma 8 induction} holds for $k-1$, we show that it also holds for $k$. First, observe that, since $l_{k-1} \preceqdot l_k$ and since $\alpha > 2U_{\tau_{l_{k-1}}}$ and $\alpha > 2U_{\tau_{{l_k} }}$:
\begin{align*}
     \hat{m}_{l_{k-1}} + U_{\tau_{l_{k-1}}} &\prec  \hat{m}_{l_{k-1}} + U_{\tau_{l_{k-1}}} + (\alpha - 2U_{\tau_{l_{k-1}}})= 
     \hat{m}_{l_{k-1}} - U_{\tau_{l_{k-1}}} + \alpha \preceq \hat{m}_{l_k}  + U_{\tau_{{l_k} }} \\
     &\hspace{-2cm} \prec \hat{m}_{l_k}  + U_{\tau_{{l_k} }} + (\alpha -2U_{\tau_{{l_k} }} ) = \hat{m}_{l_k}  - U_{\tau_{{l_k} }} + \alpha~. \numberthis \label{eqn:lemma 8 main one}
\end{align*}
Therefore, $ \hat{m}_{l_{k-1}} + U_{\tau_{l_{k-1}}} \prec \hat{m}_{l_k}  - U_{\tau_{{l_k} }} + \alpha~,$ which implies that $\hat{m}_{l_k}  - U_{\tau_{{l_k} }} + \alpha \npreceq \hat{m}_{l_{k-1}} + U_{\tau_{l_{k-1}}}$, or $l_{k} \not \preceqdot l_{k-1}.~$ Also, by assumption, $l_{k-1} \not \preceqdot z, \; \forall z \in \{p^{*} \} \cup [l_i]_{i=1}^{k-2} $, or equivalently: 
\begin{align*}
    \hat{m}_{l_{k-1}} - U_{\tau_{l_{k-1}}} + \alpha \not \preceq \hat{m}_z + U_{\tau_{z}}, \; \; \forall z \in \{p^{*} \} \cup [l_i]_{i=1}^{k-2} ~. \numberthis \label{eqn:assumption of induction}
\end{align*}
Also it is shown in \eqref{eqn:lemma 8 main one} that:
\begin{align*}
    \hat{m}_{l_{k-1}} - U_{\tau_{l_{k-1}}} + \alpha \prec \hat{m}_{l_k}  - U_{\tau_{{l_k} }} + \alpha~. \numberthis \label{eqn:complementary to assumption of induction}
\end{align*}
Hence, by \eqref{eqn:assumption of induction}, and \eqref{eqn:complementary to assumption of induction}:
\begin{align*}
    \hat{m}_{l_{k}} - U_{\tau_{l_{k}}} + \alpha \not \preceq \hat{m}_{z}  + U_{\tau_{{z} }}, \; \; \forall z \in \{p^{*} \} \cup [l_i]_{i=1}^{k-2} ~.
\end{align*}
Also, we had proved above that $l_{k} \not \preceqdot l_{k-1}.~$ From this and above, we conclude:
\begin{align*}
    \forall z \in \{p^{*}\} \cup [l_i]_{i=1}^{k-2} \cup \{l_{k-1}\}, \; l_k \not \preceqdot z 
\end{align*}
which proves the induction. 
Now, suppose that we can find another arm $l_{n+1}$ that we can add to this set. Since the induction proven above applies to this arm as well, we can deduce that $l_{n+1}$ is not prevented by any other arm in this set to enter $O_1$. Now, suppose that we sequentially keep adding other arms to this set. Considering that we have a finite amount of arms in $S$, we will eventually reach a point where we will not be able to add any more arms. Denote by $L$ a set that is constructed through such a procedure and denote the last element of this set by $j$. Then, $j$ satisfies the following:
\begin{align*}
    p^{*} \preceqdot l_1 \cdots \preceqdot j ~.
\end{align*}
Also, since we cannot keep adding any more arms to this set, we must have: 
\begin{align*}
    \forall k \in S \setminus (L \cup \{ p^{*} \}) : j \not \preceqdot k~.
\end{align*}
Considering that $j$ satisfies the above proven induction: 
\begin{align*}
    \forall k \in L \cup \{p^{*}\} : j \not \preceqdot k ~.
\end{align*}
Hence, combining two inequalities above yields: 
\begin{align*}
    \forall k \in S: j \not \preceqdot k ~.
\end{align*}
The above inequality means that arm $j$ is returned in $P$. 
Next, we prove by another induction that $p^{*} \preceqdot l_k~$ for any $l_k \in L~.$ This is already given for the first element, i.e., $p^{*} \preceqdot l_1.~$ Now assume that this is true for arm $l_{k-1}~.$ Then, $p^{*} \preceqdot l_{k-1}~$ or equivalently $\hat{m}_{p^{*}} - U_{\tau_{p^{*}}} + \alpha \preceq \hat{m}_{l_{k-1}} + U_{\tau_{l_{k-1}}}$. As shown in \eqref{eqn:lemma 8 main one}, we also have that: 
\begin{align*}
     \hat{m}_{l_{k-1}} + U_{\tau_{l_{k-1}}} \prec  \hat{m}_{l_{k}} + U_{\tau_{l_{k}}} ~.
\end{align*}
Hence: 
\begin{align*}
   \hat{m}_{p^{*}} - U_{\tau_{p^{*}}} + \alpha \preceq \hat{m}_{l_{k}} + U_{\tau_{l_{k}}}~
\end{align*}
or $p^{*} \preceqdot l_k~$ and the induction is proven. Since arm $j$ described above is part of this set, $p^{*}\preceqdot j~$ or equivalently: 
\begin{align*}
    \hat{m}_{p^{*}} - U_{\tau_{p^{*}}} + \alpha \preceq \hat{m}_{j} + U_{\tau_{j}}~.
\end{align*}
Applying Lemma \ref{lem:good_event} above, we get: 
\begin{align*}
    m_{p^{*}} - D - 2U_{\tau_{p^{*}}} + \alpha &\preceq  \hat{m}_{p^{*}} - U_{\tau_{p^{*}}} + \alpha 
    \preceq \hat{m}_{j} + U_{\tau_{j}} \preceq m_j + D + 2U_{\tau_{j}}~.
\end{align*}
Hence, $m_{p^{*}} - D - 2U_{\tau_{p^{*}}} + \alpha \preceq m_j + D + 2U_{\tau_{j}}~.$ Since $\alpha \geq 2U_{\tau_{p^{*}}} + 2U_{\tau_{j}}$, this implies that $m_{p^{*}} - D \preceq m_j + D$. 
\end{proof}

Next, we state the termination condition for R-PSI in the remark below. We obtain this condition based on the observation that if the algorithm enters the ``else" statement inside the identification step at some round $t$, then it terminates after performing the operation inside this ``else".

\begin{rem} \label{rem:termination condition}
R-PSI terminates at the latest when $\forall i \in S, U_i \leq \alpha/4$. 
\end{rem}

Next, we give a relaxed elimination condition for the arms that are $(4D + \alpha)$-suboptimal in the lemma below. The proof technique of this lemma is similar to the previous lemmas and can be found in Appendix \ref{appendix:proof of elimination}.

\begin{lem} \label{lem:4D-elimination condition}
An arm $i$ that is $(4D+ \alpha)$-suboptimal is guaranteed to be eliminated at the latest when $\forall k \in S, U_k < \bar{\Delta}_i/4 $ where $\bar{\Delta}_i =  \Delta_i - 4D$~.
\end{lem}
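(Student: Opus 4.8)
The plan is to exhibit, at the relevant round, a Pareto optimal arm that elimination-dominates arm $i$. Since arm $i$ is not $(4D+\alpha)$-suboptimal, Definition \ref{def:suboptimality} gives $\Delta_i > 4D + \alpha$, so $\bar{\Delta}_i = \Delta_i - 4D > \alpha > 0$; I would pick $j^{*} \in P^{*}$ attaining $\Delta_{i, j^{*}} = \Delta_i$, which forces $m_{j^{*}}^d - m_i^d \geq \Delta_i$ for every $d \in [M]$ (in particular $j^{*} \neq i$). Recalling that $U_\tau$ is non-increasing in $\tau$ and tends to $0$, and that by Remark \ref{rem:sampling round diff} the sampling rounds of surviving arms differ by at most one, there is a well-defined round $t^{*}$ at which $U_k \leq \bar{\Delta}_i/4$ holds for all $k \in S$ for the first time; since $\bar{\Delta}_i/4 > \alpha/4$, this round occurs no later than the termination condition of Remark \ref{rem:termination condition}. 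If $i$ has been eliminated before $t^{*}$ the claim already holds, so I may assume $i \in S$ at $t^{*}$ (note $i$ cannot sit in $P$: by Lemma \ref{lem:suboptimality guarantee} every arm of $P$ is $(2D+\alpha)$-suboptimal, whereas $i$ is not even $(4D+\alpha)$-suboptimal).

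Next I would verify the elimination inequality $\hat{m}_i + D + U_{\tau_i} \preceq \hat{m}_{j^{*}} - D - U_{\tau_{j^{*}}}$ at round $t^{*}$, assuming for the moment that $j^{*} \in S$. Under event $E$ (Lemma \ref{lem:good_event}), coordinatewise $\hat{m}_i^d + D + U_{\tau_i} \leq m_i^d + 2D + 2U_{\tau_i}$ and $\hat{m}_{j^{*}}^d - D - U_{\tau_{j^{*}}} \geq m_{j^{*}}^d - 2D - 2U_{\tau_{j^{*}}}$, so it suffices to have $m_{j^{*}}^d - m_i^d \geq 4D + 2U_{\tau_i} + 2U_{\tau_{j^{*}}}$ for all $d$. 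Since $m_{j^{*}}^d - m_i^d \geq \Delta_i = 4D + \bar{\Delta}_i$ while both $U_{\tau_i}, U_{\tau_{j^{*}}} \leq \bar{\Delta}_i/4$ at round $t^{*}$, the right side is at most $4D + \bar{\Delta}_i$, so the inequality holds and arm $i$ is removed at the elimination step of $t^{*}$.

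The step I expect to be the main obstacle is showing that $j^{*}$ is in fact still in $S$ at $t^{*}$, so that it can serve as the eliminating arm. By Lemma \ref{lem:elimination guarantee} a Pareto optimal arm is never discarded at the elimination step, and arms leave $S$ only through elimination or through $O_2$; hence if $j^{*} \notin S$ it must have been moved into $P$ via $O_2$ at some earlier round. I would then invoke Lemma \ref{lem:O2 condition} with the moved arm taken to be $j^{*}$: for the arm $i \in S$ it yields a coordinate $d_i$ with $m_i^{d_i} + D + \alpha > m_{j^{*}}^{d_i} - D$, i.e.\ $m_{j^{*}}^{d_i} - m_i^{d_i} < 2D + \alpha$. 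This contradicts $m_{j^{*}}^{d_i} - m_i^{d_i} \geq \Delta_i > 4D + \alpha$, so $j^{*}$ cannot have left $S$ while $i$ remains, and the eliminating arm is available, completing the argument.
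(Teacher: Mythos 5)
Your proof is correct. Its computational core coincides with the paper's --- the same witness $j^{*} = \argmax_{k \in P^{*}} \Delta_{i,k}$, the same elimination-inequality verification under event $E$ (reducing to $m_{j^{*}}^d - m_i^d \geq 4D + 2U_{\tau_i} + 2U_{\tau_{j^{*}}}$, which holds since $\Delta_i = 4D + \bar{\Delta}_i$ and $U_{\tau_i}, U_{\tau_{j^{*}}} \leq \bar{\Delta}_i/4$), and the same key tool, Lemma \ref{lem:O2 condition}, for the problematic case $j^{*} \notin S$ --- but you close that case differently, and more cleanly. The paper treats $j^{*} \notin S$ as a live possibility and resolves it with an auxiliary result (Lemma \ref{lem:conditional_4D_elim_guarantee}) plus a further split on whether the current round triggers termination; in the terminating sub-case it concludes only that the algorithm stops with $i \notin P$ (via Lemma \ref{lem:suboptimality guarantee}), so there ``eliminated'' effectively means ``never sampled again.'' You instead show the case cannot occur at all: if $j^{*}$ had entered $P$ via $O_2$ at an earlier round while $i$ is still in $S$ at $t^{*}$, Lemma \ref{lem:O2 condition} would give a coordinate $d_i$ with $m_{j^{*}}^{d_i} - m_i^{d_i} < 2D + \alpha$, contradicting $m_{j^{*}}^{d} - m_i^{d} \geq \Delta_i > 4D + \alpha$ in every coordinate --- essentially the contrapositive of the paper's Lemma \ref{lem:conditional_4D_elim_guarantee}. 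Because the Elimination step precedes the termination check within each round, your route yields a slightly stronger conclusion (arm $i$ is literally removed in the Elimination step of round $t^{*}$) and dispenses with both the auxiliary lemma and the termination sub-case. Either version suffices for how the lemma is used in Theorem \ref{thm: sample complexity}, namely to bound $\tau_i \leq \inf\{\tau : U_\tau \leq \bar{\Delta}_i/4\}$ for arms with $\Delta_i > 4D + \alpha$.
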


With this, we are ready to complete the proof of Theorem $\ref{thm: sample complexity}$~. \\

\noindent \textit{Final Steps in the Proof of Theorem \ref{thm: sample complexity}}: 
By Remark \ref{rem:sampling round diff} and \ref{rem:termination condition}, $\forall i \in [K]$, we have:
\begin{align}
\tau_i& \leq \inf \{\tau: U_\tau \leq \alpha/4\} 
  \leq \inf \{\tau: U_\tau \leq \alpha/5\} ~. \label{eq:tau_alpha} 
\end{align}
By Lemma \ref{lem:4D-elimination condition}, we can obtain tighter bounds on the number of sampling rounds of arms that are $(4D + \alpha)$-suboptimal:
\begin{align}
\tau_i &\leq \inf \{\tau: U_\tau < \bar{\Delta}_i/4\} 
   \leq \inf \{\tau: U_\tau \leq \bar{\Delta}_i/5\} ~. \label{eq:tau_delta}
\end{align}

The total number of samples taken from an arm that has been selected for $\tau$ sampling phases can be bounded by:
\begin{align*}
     N_{\tau} &= n_0 + \sum_{\tilde{\tau}= 2}^{\tau} n_{\tilde{\tau}} \\
     &= \lceil{2\beta_{\bar{t}, \epsilon}\log(\frac{\pi^2MK}{6 \tilde{\delta}})}\rceil+  
        \sum_{\tilde{\tau}=2}^{\tau} 1+ \lceil 4\tilde{\tau}\beta_{\bar{t}, \epsilon}\log(\frac{\tilde{\tau}}{\tilde{\tau}-1}) + 2\beta_{\bar{t}, \epsilon} \log \frac{(\tilde{\tau}-1)^2MK\pi^2}{6\tilde{\delta}} \rceil \\
        &\leq  1 + 2\beta_{\bar{t}, \epsilon}\log(\frac{\pi^2MK}{6 \tilde{\delta}})+ \sum_{\tilde{\tau}=2}^{\tau}\bigg( 2 + 4\tilde{\tau}\beta_{\bar{t}, \epsilon}\log(\frac{\tilde{\tau}}{\tilde{\tau}-1}) + 2\beta_{\bar{t}, \epsilon} \log \frac{(\tilde{\tau}-1)^2MK\pi^2}{6\tilde{\delta}}\bigg) ~.
\end{align*}
Simplifying r.h.s. of the above display, we obtain: 
\begin{align*}
    N_{\tau} \leq 2\tau(\beta_{\bar{t}, \epsilon} \log(\frac{\tau^2MK\pi^2}{6\tilde{\delta}}) + 1)   ~.
\end{align*}

Now, we can bound the sample complexity as follows. 
\begin{align}
    N &= \sum_{i=1}^K N_{\tau_i} \notag \\
    &= \sum_{i: \;\Delta_i > 4D + \alpha} N_{\tau_i} + \sum_{i: \;\Delta_i \leq 4D + \alpha} N_{\tau_i} \notag\\
    & \leq \sum_{i: \;\Delta_i > 4D + \alpha} \, 2\tau_{(\bar{\Delta}_i)}\left(\beta_{\bar{t}, \epsilon} \log\left(\frac{\tau_{(\bar{\Delta}_i)}^2MK\pi^2}{6\tilde{\delta}}\right) + 1\right) + \sum_{i : \; \Delta_i \leq 4D + \alpha}\, 2\tau_{(\alpha)}\left(\beta_{\bar{t}, \epsilon} \log\left(\frac{\tau_{(\alpha)}^2MK\pi^2}{6\tilde{\delta}}\right) + 1\right) \notag \\
    & = \sum_{i=1}^K \, 2 \tau_{( \Delta^\alpha_i) }\left(\beta_{\bar{t}, \epsilon} \log\left(\frac{\tau_{( \Delta^\alpha_i)}^2 MK\pi^2}{6\tilde{\delta}}\right) + 1\right) \label{eq:gapdependent} \\
    & \leq K \tau_{(\alpha)}\left(2\beta_{\bar{t}, \epsilon} \log\left(\frac{\tau_{(\alpha)}^2MK\pi^2}{6\tilde{\delta}}\right) + 2\right) \notag ~,
\end{align}
where $\Delta^\alpha_i \coloneqq \max (\alpha, \bar{\Delta}_i)$.

Lastly, the Pareto accuracy of R-PSI follows from Lemma \ref{lem:suboptimality guarantee} and \ref{lem:coverage}. \qed

\subsection{Proof of Corollary \ref{cor:subgaussian_sample}}
\label{appendix:cor_subgaussian_sample}

Let $\tau_{-}:=\tau_{(\alpha)} - 1~.$ By definition of $\tau_{(\alpha)}$, we have:
\begin{equation*}
    U_{\tau_{\alpha} - 1} = U_{\tau_{-}} = R\left(h_{\epsilon}+ 1/\sqrt{\beta_{\bar{t},\epsilon}\tau_{-}} \right)- R(h_{\epsilon}) > \alpha/5 ~.
\end{equation*}
Using $R(t)$ from Example \ref{examp:subgaussian}, we have:

\begin{align*}
& \sigma \sqrt{2} \bigg(\sqrt{\log\bigg(\frac{1}{1/2-h_{\epsilon}-\sqrt{\frac{1}{\beta_{\bar{t},\epsilon}\tau_{-}}}}\bigg)}- \sqrt{\log\big(\frac{1}{1/2-h_{\epsilon}}\big)} \bigg) > \alpha/5 \\ 
& \Leftrightarrow 2\sigma^2 \log\left( \frac{1}{1/2-h_{\epsilon}-\sqrt{\frac{1}{\beta_{\bar{t},\epsilon}\tau_{-}}}}\right) > 
 \frac{\alpha^2}{25} + 2\sigma^2 \log\left(\frac{1}{1/2-h_{\epsilon}}\right)
    + \frac{2\alpha\sigma\sqrt{2}}{5} \sqrt{\log\left( \frac{1}{1/2-h_{\epsilon}} \right)} \\
& \Leftrightarrow    \log\bigg(\frac{1}{1/2-h_{\epsilon}-\sqrt{\frac{1}{\beta_{\bar{t},\epsilon}\tau_{-}}}}\bigg) > \frac{\alpha^2}{50\sigma^2} + \log\left(\frac{1}{1/2-h_{\epsilon}}\right)
    + \frac{\alpha\sqrt{2}}{5\sigma}\sqrt{\log\left(\frac{1}{1/2-h_{\epsilon}}\right)} ~.
\end{align*}

Let $c_1 = \frac{1}{50\sigma^2}$ and $c_2 = \frac{\sqrt{2}}{5\sigma}\sqrt{\log\left(\frac{1}{1/2-h_{\epsilon}}\right)}$. Continuing from the above display, 

\begin{align*}
   & \log\bigg(\frac{1}{1/2-h_{\epsilon}-\sqrt{\frac{1}{\beta_{\bar{t},\epsilon}\tau_{-}}}}\bigg) > \alpha^2 c_1 + \log\left(\frac{1}{1/2-h_{\epsilon}}\right)
    + \alpha c_2 \\
   & \Leftrightarrow \frac{1}{1/2-h_{\epsilon}-\sqrt{\frac{1}{\beta_{\bar{t},\epsilon}\tau_{-}}}} > 
   \exp \left(\alpha^2 c_1 + \log\left(\frac{1}{1/2-h_{\epsilon}}\right)
    + \alpha c_2 \right) \\
   & \Leftrightarrow \frac{1}{\exp \left(\alpha^2 c_1 + \log\left(\frac{1}{1/2-h_{\epsilon}}\right)
    + \alpha c_2 \right)} > {1/2-h_{\epsilon}-\sqrt{\frac{1}{\beta_{\bar{t},\epsilon}\tau_{-}}}} \\
    & \Leftrightarrow \sqrt{\frac{1}{\beta_{\bar{t},\epsilon}\tau_{-}}} > 1/2 - h_{\epsilon} -{\exp \left(-\alpha^2 c_1 - \log\left(\frac{1}{1/2-h_{\epsilon}}\right) - \alpha c_2 \right)} \\
    & \Leftrightarrow \sqrt{\frac{1}{\beta_{\bar{t},\epsilon}\tau_{-}}} > (1/2 - h_{\epsilon})(1 - {\exp(-\alpha^2 c_1 - \alpha c_2)}) \\
    & \Leftrightarrow \tau_{-} < \frac{1}{(1/2 - h_{\epsilon})^2\beta_{\bar{t},\epsilon}}\left(\frac{1}{1 - {\exp(-\alpha^2 c_1 - \alpha c_2)}}\right)^2 ~.
\end{align*}
Continuing from the above display, using $\frac{1}{1-e^{-x}}= 1 + \frac{1}{e^x-1}\leq 1 + \frac{1}{x}$ we get:
\begin{align*}
\tau_{-} < \frac{1}{(1/2 - h_{\epsilon})^2\beta_{\bar{t},\epsilon}}\left(1 + \frac{1}{{\alpha^2 c_1 + \alpha c_2}}\right)^2 ~.
\end{align*}

Recall that $\tau_{-}:=\tau_{(\alpha)} - 1$:
\begin{align}
 & \tau_{(\alpha)} < 1 + \frac{1}{(1/2 - h_{\epsilon})^2\beta_{\bar{t},\epsilon}}\left(1 + \frac{1}{{\alpha^2 c_1 + \alpha c_2}}\right)^2 \notag \\
 &  = 1 + \frac{(\bar{t}-h_\epsilon)^2}{(1/2 - h_{\epsilon})^2}\left(1 + \frac{1}{{\alpha^2 c_1 + \alpha c_2}}\right)^2 \notag\\
 & <  1 + \left(1 + \frac{1}{{\alpha^2 c_1 + \alpha c_2}}\right)^2 \label{eq:tau} \\
 &\leq 1 + \left(1 + \frac{1}{\alpha c_2}\right)^2 \notag \\
 &\leq \left(\sqrt{2} + \frac{1}{\alpha c_2}\right)^2 \notag \\
 &\leq  \left( \frac{2}{{ \alpha c_2}}\right)^2 \label{eq:c2} \\
 & = \frac{4}{\alpha^2 c^2_2} \notag\\ 
 & = \frac{50\sigma^2}{\alpha^2 \log\left( \frac{1}{1/2-h_\epsilon}\right)}\label{eq:taufin} ~,
\end{align}
where \eqref{eq:c2} holds when $\alpha c_2 \leq 1/\sqrt{2}$, which is valid when $\alpha \rightarrow 0$ and $\epsilon$ is fixed. Equation \ref{eq:tau} holds due to $(\bar{t}-h_\epsilon)^2 < (1/2 - h_{\epsilon})^2$ which follows from
\begin{align*}
   & (\bar{t}-h_\epsilon)^2 - (1/2-h_\epsilon)^2 
    = (\bar{t}-1/2)(\bar{t}-2h_\epsilon + 1/2)
   <0 ~.
\end{align*}

The last line holds due to $h_\epsilon \leq  \bar{t} < 1/2$. For oblivious and prescient adversaries, Theorem \ref{thm: sample complexity} has the assumption $\epsilon \leq \frac{2 \bar{t}}{1+2 \bar{t}}$ which is equivalent to $ \frac{\epsilon}{2(1-\epsilon)}  \leq  \bar{t}$. For malicious adversary, Theorem \ref{thm: sample complexity} has the assumption $\epsilon \leq \bar{t}$. Hence, $h_\epsilon \leq \bar{t}$ for all adversaries. By Definition \ref{def:class_def}, $\bar{t} \in (0,1/2)$. Hence, $h_\epsilon \leq  \bar{t} < 1/2$.   

Starting from \eqref{eqn:sample_complexity_final}:
\begin{align}
N &\leq K \tau_{(\alpha)}\left(2\beta_{\bar{t}, \epsilon} \log\left(\frac{\tau_{(\alpha)}^2MK\pi^2}{6\tilde{\delta}}\right) + 2\right) \notag \\
& \leq 4K \tau_{(\alpha)}\beta_{\bar{t}, \epsilon} \log\left(\frac{\tau_{(\alpha)}^2MK\pi^2}{6\tilde{\delta}}\right)  \label{eq:asy1} \\
& = 4K \tau_{(\alpha)}\beta_{\bar{t}, \epsilon} \left(\log\left(\frac{\tau_{(\alpha)}^2MK}{\tilde{\delta}}\right) +  \log\left(\frac{\pi^2}{6}\right)\right) \notag \\
&  \leq 8K \tau_{(\alpha)}\beta_{\bar{t}, \epsilon} \log\left(\frac{\tau_{(\alpha)}^2MK}{\tilde{\delta}}\right) \label{eq:asy2} \\
& \leq 8K \tau_{(\alpha)}\beta_{\bar{t}, \epsilon} \log\left(\frac{\tau_{(\alpha)}^2M^2K^2}{\tilde{\delta^2}}\right)\notag \\
& = 16K \tau_{(\alpha)}\beta_{\bar{t}, \epsilon} \log\left(\frac{\tau_{(\alpha)}MK}{\tilde{\delta}}\right) \notag~.
\end{align}

Equations \ref{eq:asy1} and \ref{eq:asy2} hold for large $M,K$ and small $\tilde{\delta}$ values. Plugging in \eqref{eq:taufin}, we have  

\begin{align}
    &N \leq  16K \frac{50\sigma^2}{\alpha^2 \log\left( \frac{1}{1/2-h_\epsilon}\right)} \beta_{\bar{t}, \epsilon} \log\left(\frac{\frac{50\sigma^2}{\alpha^2 \log\left( \frac{1}{1/2-h_\epsilon}\right)} MK}{\tilde{\delta}}\right)  \notag \\
    & =  800 \frac{K \sigma^2}{\alpha^2 \log\left( \frac{1}{1/2-h_\epsilon}\right)} \beta_{\bar{t}, \epsilon} \log\left(\frac{50\sigma^2 MK}{\alpha^2 \log\left( \frac{1}{1/2-h_\epsilon}\right)\tilde{\delta}}\right) \label{eq:lastbound} \\
    & = \mathcal{O} \left( \frac{\sigma^2 K\beta_{\bar{t}, \epsilon}}{\alpha^2 \log\left( \frac{1}{1/2-h_\epsilon}\right)}\log\left(\frac{ \sigma^2 MK}{\alpha^2 \log\left( \frac{1}{1/2-h_\epsilon}\right)\tilde{\delta}}\right) \right)  \notag ~.
\end{align}

Let us analyze the bound under the assumption $\epsilon = 0$ and $\bar{t}$ is a constant. We have $\log\left( \frac{1}{1/2-h_\epsilon}\right) = \log(2)$, $\beta_{\bar{t}, \epsilon} =  \frac{1}{\bar{t}^2} $. Starting from \eqref{eq:lastbound}, we have

\begin{align*}
   & N \leq 800 \frac{\sigma^2 K\beta_{\bar{t}, \epsilon}}{\alpha^2 \log\left( \frac{1}{1/2-h_\epsilon}\right)}\log\left(\frac{ 50\sigma^2 MK}{\alpha^2 \log\left( \frac{1}{1/2-h_\epsilon}\right)\tilde{\delta}}\right) \\
   & =  \frac{800 \sigma^2}{\bar{t}^2\log(2)} \frac{K}{\alpha^2 } \left(\log\left(\frac{ 50 \sigma^2 MK}{\alpha^2 \tilde{\delta} \log(2)}\right)  \right) \\
   & = \mathcal{O} \left( \frac{K \sigma^2}{\alpha^2 } \log\left(\frac{ \sigma^2 MK}{\alpha \tilde{\delta}}\right) \right) ~.
\end{align*}

\subsection{Proof of Corollary \ref{cor:linear_sample}} \label{appendix:linear_sample}

Let $\tau_{-}:=\tau_{\left({\Delta^\alpha_i }\right)}-1$. By definition of $\tau_{\left({\Delta^\alpha_i }\right)}$ , we have:
$$
U_{\tau_{-}}=R\left(h_\epsilon+1 / \sqrt{\beta_{\bar{t}, \epsilon}\left(\tau_{-}\right)}\right)-R\left(h_\epsilon\right)>\left({\Delta^\alpha_i }\right) / 5~.
$$

 Let us define $z \coloneqq B \rev{\bar{m}_2}$ for brevity. Notice that $z>0$ since $B>0$ by Definition \ref{defn:linear} and $\rev{\bar{m}_2} >0 $ as it is \rev{the maximum over the} median of non-negative values. Also, $\beta_{\bar{t}, \epsilon} > 0 $ for oblivious, prescient and malicious adversaries. Using $R(t)=B \bar{m}_2 t$, we have 

\begin{align*}
   & z \left(h_\epsilon + (\beta_{\bar{t}, \epsilon} \tau_{-})^{\frac{-1}{2}} \right) - z h_\epsilon > \frac{\left({\Delta^\alpha_i }\right)}{5}
   \Leftrightarrow 
     (\beta_{\bar{t}, \epsilon} \tau_{-})^{\frac{-1}{2}} z > \frac{\left({\Delta^\alpha_i }\right)}{5} 
    \Leftrightarrow  
    (\beta_{\bar{t}, \epsilon} \tau_{-})^{\frac{-1}{2}} > \frac{\left({\Delta^\alpha_i }\right)}{5z} 
     \Leftrightarrow (\beta_{\bar{t}, \epsilon} \tau_{-})^{-1} > \frac{\left({\Delta^\alpha_i }\right)^2}{25 z^2} \\
    &\Leftrightarrow \beta_{\bar{t}, \epsilon} \tau_{-} <  \frac{25 z^2}{\left({\Delta^\alpha_i }\right)^2} 
    \Leftrightarrow \tau_{-} < \frac{25 z^2}{\beta_{\bar{t}, \epsilon} \left({\Delta^\alpha_i }\right)^2} 
    \Leftrightarrow \tau_{\left({\Delta^\alpha_i }\right)} < 1+ \frac{25 z^2}{\beta_{\bar{t}, \epsilon} \left({\Delta^\alpha_i }\right)^2} ~.
\end{align*}
Recall \eqref{eq:gapdependent}:
\begin{align*}
N &\leq \sum_{i=1}^K \, 2 \tau_{( \Delta^\alpha_i) }\left(\beta_{\bar{t}, \epsilon} \log\left(\frac{\tau_{( \Delta^\alpha_i)}^2 MK\pi^2}{6\tilde{\delta}}\right) + 1\right)~.
\end{align*}

Putting them together, 
\begin{align*}
    &N \leq \sum^{K}_{i} \, 2  \left( \beta_{\bar{t}, \epsilon} +\frac{25 \bar{m}_2^2 B^2}{\left({\Delta^\alpha_i }\right)^2}  \right) \left( \log\left(\frac{ 
       \left(1 +\frac{25 \bar{m}_2^2 B^2}{ \beta_{\bar{t}, \epsilon} \left({\Delta^\alpha_i }\right)^2}   \right)    ^2MK\pi^2}{6\tilde{\delta}}\right) + 1\right) ~.
\end{align*}

\subsection{Proof of Lemma \ref{lem:4D-elimination condition}}
\label{appendix:proof of elimination}
First, we give the following lemma needed in the proof. 

\begin{lem} \label{lem:conditional_4D_elim_guarantee}
Consider an arm $i$ that is $(2D+ \alpha)$-suboptimal and suppose that the Pareto optimal arm $j= \argmax_{z \in P^{*}} \Delta_{i,z}$ is moved to $P$ at $t_0$ before the termination round. Then $i$ is eliminated at $t \leq t_0$.
\end{lem}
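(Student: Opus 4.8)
The plan is to argue by contradiction, showing that if $i$ were still undecided when $j$ is placed in $P$, the very test that let $j$ enter $P$ would force the median gap between $i$ and $j$ to be too small. First I would unpack the hypothesis: since $i$ is not $(2D+\alpha)$-suboptimal we have $\Delta_i > 2D+\alpha$, and because $j = \argmax_{z \in P^*}\Delta_{i,z}$ realizes this gap, $\Delta_{i,j} = \Delta_i > 0$, so the maximum with $0$ in Definition \ref{def:suboptimality} is inactive and $m_j^d - m_i^d \geq \Delta_i > 2D+\alpha$ holds in \emph{every} coordinate $d \in [M]$. In particular $\Delta_i > 0$ forces $i \notin P^*$, hence $i \neq j$.

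Next I would pin down how $j$ enters $P$. Since $j$ is moved to $P$ strictly before the termination round, the algorithm cannot have taken the ``else'' branch of the identification step at $t_0$ (that branch returns $P$ immediately); therefore it took the ``if'' branch and $j \in O_2$ at round $t_0$. Now suppose, for contradiction, that $i$ has not been eliminated by round $t_0$. Arms leave $S$ only through the elimination step or through $O_2$, and by the suboptimality guarantee (Lemma \ref{lem:suboptimality guarantee}) $P$ contains only $(2D+\alpha)$-suboptimal arms, so $i$ cannot have reached $P$; hence $i$ is still in $S$ when $O_2$ is formed at $t_0$. The defining condition of $O_2$ applied to the element $j$, instantiated at $k = i \in S \setminus \{j\}$, yields $\hat{m}_i - U_i + \alpha \npreceq \hat{m}_j + U_j$, i.e.\ there is a coordinate $d$ with $\hat{m}_i^d - U_i + \alpha > \hat{m}_j^d + U_j$.

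The final step is to discharge the empirical medians using the good event $E$ of Lemma \ref{lem:good_event}, under which $\hat{m}_i^d \leq m_i^d + D + U_i$ and $\hat{m}_j^d \geq m_j^d - D - U_j$. Substituting into the coordinate inequality, the $U_i$ and $U_j$ terms cancel and I obtain $m_i^d + D + \alpha > m_j^d - D$, i.e.\ $m_j^d - m_i^d < 2D + \alpha$, contradicting the per-coordinate bound established in the first step. Hence $i$ cannot survive in $S$ up to the $O_2$ computation at $t_0$, and since its only exit from $S$ is elimination, it must have been eliminated at some $t \leq t_0$.

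I expect the only real subtlety to be the bookkeeping of the round structure: one must use that the elimination step precedes the identification step within a round, so that ``$i \in S$ at the moment $O_2$ is formed'' is exactly the hypothesis needed to instantiate the $O_2$ condition at $k=i$, and that the phrase ``before the termination round'' genuinely rules out the ``else'' branch. The algebra under $E$ is routine once the $U$ terms cancel; the crux is lining up the timing and confirming (via Lemma \ref{lem:suboptimality guarantee}) that $i$ cannot escape $S$ by being declared Pareto-optimal, so that ``not eliminated'' is equivalent to ``still in $S$''.
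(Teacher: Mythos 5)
Your proof is correct and follows essentially the same route as the paper's: the paper simply cites Lemma~\ref{lem:O2 condition} (whose proof is exactly your step of instantiating the $O_2$ membership condition at $k=i$ and discharging the empirical medians via the good event $E$) and combines it with Lemma~\ref{lem:suboptimality guarantee}, whereas you inline that argument and phrase it as a contradiction. The timing points you flag (the ``if'' branch being forced because $t_0$ precedes termination, and elimination preceding identification within a round) are handled implicitly in the paper and are correctly resolved in your write-up.
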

\begin{proof}
Assume that $i \in S$ at the beginning of `Identification' step at sampling phase $t_0$.
By Lemma \ref{lem:O2 condition}, $\exists d_i \in [M]: m_i^{d_i} + (2D + \alpha) > m_j^{d_i}$, implying $\Delta_i < 2D+\alpha$, hence arm $i$ cannot be $(2D+ \alpha)$-suboptimal. Therefore, it is not possible for arm $i$ to be in $S$ after the `Elimination' step at $t_0$. Similarly, $i$ cannot be in $P$ at $t_0$ because of Lemma \ref{lem:suboptimality guarantee}. These together imply that it was eliminated at a round $t \leq t_0$.
\end{proof}

Next we proceed proving  Lemma \ref{lem:4D-elimination condition}.
By Lemma \ref{lem:suboptimality guarantee}, if $i \not \in S$, then $i$ is already eliminated since $i$ cannot be in $P$. Now, suppose that $i \in S$. Consider the optimal arm $j= \argmax_{k \in P^{*}} \Delta_{i, k}$. Note that $\Delta_{i, j} = \Delta_i > 4D + \alpha$ and
\begin{align} \label{eqn:suboptimality gap implies this}
    \forall d\in [M], \;m_j^d \geq m_i^d + \Delta_i~.
\end{align}
Now, suppose that $j \in S$ so that $U_j \leq \bar{\Delta}_i/4$. Then, by Lemma \ref{lem:good_event} and by \eqref{eqn:suboptimality gap implies this}: 
\begin{align*} 
    \hat{m}_j^d - D - U_j & \geq m_j^d -2D -2U_j  \geq m_i^d + \Delta_i -2D - 2U_j \\
    & \geq \hat{m}_i^d + \Delta_i -3D - 2U_j - U_i ~. \numberthis \label{eqn:lemma 4 and 35 implies this}
\end{align*}
Considering that $\Delta_i > 4D $ and $U_i, U_j < \bar{\Delta}_i/4$: 
\begin{align*}
\hat{m}_i^d + \Delta_i -3D - 2U_j - U_i 
= \hat{m}_i^d + \Delta_i -4D - 2U_j - 2U_i + D + U_i 
> \hat{m}_i^d + D + U_i ~. \numberthis \label{eqn:assumption of lemma implies this}
\end{align*}

We conclude from \eqref{eqn:lemma 4 and 35 implies this} and \eqref{eqn:assumption of lemma implies this}:
\begin{align*} 
\hat{m}_j^d -D - U_j > \hat{m}_i^d + D + U_i ,~\forall d \in [M] ~.
\end{align*}
Hence, by the elimination rule of R-PSI, when $j\in S$, $i$ is eliminated. Now suppose that $j \notin S$. Since $j$ cannot be eliminated by Lemma \ref{lem:elimination guarantee}, $j\in P$. By Lemma \ref{lem:conditional_4D_elim_guarantee}, $i$ is eliminated.

Lastly, suppose that at the earliest round where $\forall k \in S \;, U_k \leq \bar{\Delta}_i/4$, the algorithm enters the else statement in the `Identification' step. 
 
Then, the algorithm terminates by Remark \ref{rem:termination condition} and $i$ is not included in $P$ by Lemma \ref{lem:suboptimality guarantee}. To summarize, in all possible scenarios, we showed that $i$ is guaranteed to be eliminated (if it is not already eliminated) as soon as $\forall k \in S, \; U_k \leq \bar{\Delta}_i/4$.

\end{document}